%% file: main.tex
\pdfoutput=1
\documentclass{article} 
\usepackage{iclr2027_conference,times}
 
\input{math_commands.tex}

\usepackage{microtype}
\usepackage{hyperref}
\usepackage{url}
\usepackage{booktabs}
\usepackage{amsmath}
\usepackage{amsfonts}
\usepackage{amsthm}
\usepackage[T1]{fontenc}
\usepackage{tikz}
\usepackage{algorithm}
\usepackage{algorithmic}
\usepackage{xcolor}
\usepackage{graphicx}        
\usepackage{float}
\usepackage{wrapfig}         
\usepackage{bm}
 
\graphicspath{{images/}}     
 
\newcommand{\fullwidth}{\linewidth}

\usepackage[most]{tcolorbox}
\newtcolorbox{highlight}{
  enhanced, breakable, sharp corners,
  colback=boxblue!40, colframe=arrowblue,
  boxrule=0.6pt, left=3pt, right=3pt, top=2pt, bottom=2pt, boxsep=1pt
}

\usetikzlibrary{positioning, arrows.meta, fit, backgrounds, calc, decorations.pathreplacing}
 \usepackage[capitalize,noabbrev]{cleveref}

\newtheorem{theorem}{Theorem}
\newtheorem{lemma}[theorem]{Lemma}
\newtheorem{proposition}[theorem]{Proposition}
\newtheorem{definition}[theorem]{Definition}
\newtheorem{remark}[theorem]{Remark}
\newtheorem{corollary}[theorem]{Corollary}

\providecommand{\mI}{\bm{I}}
\providecommand{\mZero}{\bm{0}}
\providecommand{\mW}{\bm{W}}
\providecommand{\seqlen}{T}                        
\providecommand{\pos}[1]{\bar{\vp}_{#1}}           
\providecommand{\gph}{\mathcal{G}}                 
\providecommand{\vertices}{\mathcal{V}}            
\providecommand{\edges}{\mathcal{E}}               
\providecommand{\reach}[1]{\vertices_{#1}}         
\providecommand{\rootnode}{r}                            
\providecommand{\cand}[1]{c_{#1}}                  
\providecommand{\ansv}{c_{i^{\star}}}              
\providecommand{\wrongv}{c_{3-i^{\star}}}          
\providecommand{\eg}[1]{u_{#1}}                    
\providecommand{\eh}[1]{v_{#1}}                    
\providecommand{\dist}{\operatorname{dist}}
\providecommand{\ecc}{\operatorname{ecc}}
\providecommand{\walk}[1]{\mathrm{W}_{#1}}         
\providecommand{\diam}{D}
\providecommand{\nmax}{n_{\max}}
\providecommand{\nv}{n}                            
\providecommand{\nedge}{m}                         
 
\providecommand{\tokBOS}{\ensuremath{\langle \mathrm{s}\rangle}}
\providecommand{\tokEdge}{\ensuremath{\langle \mathrm{e}\rangle}}
\providecommand{\tokQ}{\ensuremath{\langle \mathrm{Q}\rangle}}
\providecommand{\tokR}{\ensuremath{\langle \mathrm{R}\rangle}}
\providecommand{\tokA}{\ensuremath{\langle \mathrm{A}\rangle}}
 
\providecommand{\emb}[1]{\bm{\phi}_{#1}}           
\providecommand{\embmat}{\bm{\Phi}}                
\providecommand{\vocab}{\mathbb{V}}
\providecommand{\dte}{d_{\mathrm{te}}}
\providecommand{\dpe}{d_{\mathrm{pe}}}
 
\providecommand{\ct}{\operatorname{ct}}            
\providecommand{\bufa}{\operatorname{b}_1}         
\providecommand{\bufb}{\operatorname{b}_2}         
\providecommand{\bufc}{\operatorname{b}_3}         
\providecommand{\bufd}{\operatorname{b}_4}         
\providecommand{\pe}{\operatorname{pe}}            
\providecommand{\pidx}{\operatorname{Pos}}         
 
\providecommand{\Prop}{\bm{A}_{\gph}}              
\providecommand{\chsr}[2]{\mathrm{Ch}(#1,#2)}      
\providecommand{\gain}{\lambda}                    
\providecommand{\mgn}{\delta}                      
\providecommand{\nsteps}{N}                        
\providecommand{\prm}{c}                           
\providecommand{\dnz}{\widehat{\vx}_{\theta}}      
\providecommand{\vfield}{\bm{v}_{\theta}}          
\providecommand{\nse}{\bm{\xi}}                    

 \providecommand{\Ntrain}{N_{\mathrm{train}}}
\providecommand{\Nback}{N_{\mathrm{back}}}
\providecommand{\Neval}{N_{\mathrm{eval}}}
\providecommand{\Wdec}{\bm{W}_{\mathrm{dec}}}   

\providecommand{\Tmax}{T_{\max}}
\providecommand{\epe}{\varepsilon_{\mathrm{pe}}}
\providecommand{\epsth}{\varepsilon_{\mathrm{th}}}   

\definecolor{TokenPurple}{HTML}{E8D7F0}
\definecolor{TokenBlue}{HTML}{D8EDF8}
\definecolor{TokenOrange}{HTML}{FAE4D4}
\definecolor{TokenGreen}{HTML}{DDF3DE}
\definecolor{ArrowDarkBlue}{HTML}{2B5C7D}
 
\definecolor{darkblue}{rgb}{0, 0, 0.5}
\hypersetup{colorlinks=true, citecolor=darkblue, linkcolor=darkblue, urlcolor=darkblue}
 
\definecolor{tokencolor}{RGB}{236, 218, 238}   
\definecolor{nodecolor}{RGB}{214, 236, 245}    
\definecolor{matrixcolor}{RGB}{245, 226, 214}  
\definecolor{answercolor}{RGB}{214, 240, 214}  
\definecolor{arrowcolor}{RGB}{197, 146, 117}   
\definecolor{arrowblue}{RGB}{43, 85, 115}
\definecolor{boxpink}{RGB}{242, 224, 242}
\definecolor{boxblue}{RGB}{221, 239, 250}
\definecolor{boxorange}{RGB}{251, 237, 226}
\definecolor{boxgreen}{RGB}{223, 244, 223}
\definecolor{textgray}{RGB}{50, 50, 50}
 
\title{Unrolled Flow Models for Reasoning}
 
\author{%
    Faissal Izermine\textsuperscript{\textnormal{1,2}},
    Hanru Bai\textsuperscript{\textnormal{1,3}},
    Oscar Davis\textsuperscript{\textnormal{4}},
    T. Konstantin Rusch\textsuperscript{\textnormal{1,2,5,6}\thanks{\scriptsize Correspondence to \href{mailto:tkrusch@tue.ellis.eu}{\texttt{tkrusch@tue.ellis.eu}}}~}\\
    {\small \textsuperscript{1}Max Planck Institute for Intelligent Systems,}
    {\small \textsuperscript{2}ELLIS Institute Tübingen,} 
    {\small \textsuperscript{3}ETH Zurich,}\\
    {\small \textsuperscript{4}University of Oxford,} 
    {\small \textsuperscript{5}Tübingen AI Center,}
    {\small \textsuperscript{6}Liquid AI}
}

\iclrfinalcopy 
 
\begin{document}
 
\maketitle

\begin{abstract}
Flow matching enables language generation in few steps, but whether
additional integration steps improve reasoning remains unclear. We prove that a flow parameterized by a two-layer Transformer can solve graph reachability, with the required number of integration steps increasing with the target’s distance from the root. Yet, standard flow language models can fail to benefit from additional steps on reasoning tasks. We attribute this limitation to objectives that supervise each time point independently, without explicitly training successive steps to build on one another. To address this, we instead train through the model’s own latent rollout over a randomly sampled subinterval of $[0, 1]$, decoding only at the endpoint. On ProsQA, this raises accuracy to $97\%$ and enables performance to improve with additional integration steps. For the longer rollouts required by reasoning tasks such as Sudoku and Maze, retracting the latent state onto a sphere stabilizes the dynamics and yields substantial gains over baselines with more than three times as many parameters. Sampling multiple rollouts further improves performance when paired with a parameter-free selection score, although reliable selection remains challenging for longer answers. Together, these results establish a theoretical basis for reasoning with flows and show how rollout training, stable latent dynamics, and rollout selection help realize this capacity in practice. The implementation can be found \href{https://github.com/camail-official/Unrolled_Flow_Models_for_Reasoning_UFM.git}{here}.
\end{abstract}


\section{Introduction}

Autoregressive language models can spend more computation at inference by generating a longer chain of thought \citep{wei2022cot} or, more broadly, by scaling test-time compute \citep{snell2024testtime}. Flow matching \citep{Lipman2023FlowMatching, Liu2023RectifiedFlow, Albergo2023StochasticInterpolants} offers a different way to produce discrete answers \citep{eijkelboom2024variational}. Instead of predicting one token after
another, flow models start from noise and repeatedly update a continuous state that is decoded into a token sequence. The number of updates can be chosen at inference time. This makes flow models an appealing candidate for adaptive test-time computation.

Additional integration steps do not inherently guarantee deeper reasoning. In generation tasks, a small number of steps is often enough, and recent work has pushed flow language models toward one- or few-step sampling by learning \emph{flow maps} that make direct jumps along a trajectory \citep{FlowMapLM2026, CategoricalFlowMaps2026, ScalingCategoricalFlowMaps2026, SelfConditionedFlowMap2026}. For reasoning, additional steps are useful only if later updates can build
on information produced by earlier ones.

Prior work in latent and recursive reasoning motivates this work. Coconut shows that autoregressive language models can update a continuous latent state instead of emitting every intermediate thought as text \citep{Hao2024Coconut}. On graph reachability, \citet{zhu2025reasoning} show that repeated latent updates can maintain a superposition of reachable vertices and expand it one hop at a time. Recursive reasoners similarly solve structured tasks by applying a small, weight-tied network repeatedly to an internal state \citep{wang2025hrm, JolicoeurMartineau2025}. An Euler rollout of a flow has the same basic form: one network is applied again and again to a latent state. We investigate how these repeated applications cooperate.

To show what integration steps can compute, we construct explicit weights for
a two-layer Transformer that defines a flow solving the graph reachability problem. Its Euler rollout reaches vertices one hop per step, so the number
of steps needed to separate the candidates depends on the distance from the root to the answer. However, this existence result does not establish that training recovers the construction (\Cref{sec:theory}).

Standard training of flow models does not directly assess whether an update produces
a state that later updates can leverage. To address this limitation, we propose \emph{Unrolled Flow Model} (UFM), which trains through its own latent rollout using a single prediction loss at the final state. To reduce memory usage for long rollouts, we backpropagate this loss through only the last few steps (\Cref{sec:method}). We stabilize these rollouts using sphere retraction to control latent-state growth (\Cref{sec:scaling}). On ProsQA and Sudoku-Hard, UFM improves with additional steps, whereas evaluated flow baselines show little gain. Multiple rollouts further improve reliability, although selecting among them remains challenging for tasks with long answers, such as maze solving.

Our contributions are: (i) We give a constructive result
showing how flow integration can implement graph search with the number of steps
determined by the instance. (ii) We introduce training through a terminal
rollout loss and show that it makes flow steps useful for reasoning in our
experiments, while connecting the resulting dynamics to a superposition-based
mechanism. (iii) We use sphere retraction to stabilize long rollouts and demonstrate stronger scaling with additional steps than substantially larger flow baselines on structured reasoning tasks.

\section{Background}
\label{sec:background}

\subsection{Conditional flow matching}

\paragraph{Setup.}
Given a prompt $c$, the model generates an answer with $L$ positions.
A flow transforms noise at $t=0$ into an answer representation at $t=1$,
using $N$ Euler steps on a grid $0=t_0<\cdots<t_N=1$.

\paragraph{Flow matching.}
Flow matching \citep{Lipman2023FlowMatching, Liu2023RectifiedFlow, Albergo2023StochasticInterpolants} defines a path between noise $\vx_0 \sim p_0 = \mathcal{N}(0,\mI)$ and data $\vx_1 \sim p_1$. We use the linear interpolant
\begin{equation}
\label{eq:interpolant}
I_t := (1-t)\,\vx_0 + t\,\vx_1,
\qquad
\dnz(\vx, t) \approx \mathbb{E}\big[\vx_1 \mid I_t = \vx\big].
\end{equation}
The denoiser $\dnz$ predicts the clean endpoint from a point on this path. Since $\vx_1 - \vx_0 = (\vx_1 - I_t)/(1-t)$, its corresponding velocity field is
\begin{equation}
\label{eq:velocity}
\vfield(\vx, t) = \frac{\dnz(\vx, t) - \vx}{1-t}.
\end{equation}
Integrating $\dot{\vz}_t = \vfield(\vz_t,t)$ from $t=0$ to $t=1$ transports noise toward the data distribution. In practice, we approximate this integration with Euler steps:
\begin{equation}
\label{eq:euler}
\vz_{t_{k+1}} = \vz_{t_k} + (t_{k+1} - t_k)\,\vfield(\vz_{t_k}, t_k).
\end{equation}

\paragraph{Conditioning and training.}

The prompt tokens are concatenated with $L$ answer slots and processed by a bidirectional Diffusion Transformer (DiT) \citep{Vaswani2017Attention, Peebles2023DiT}. The DiT receives the flow time $t$ as a conditioning input, while the prompt remains clean throughout the rollout; only the answer slots evolve.  The conditional denoiser is trained using:

\begin{equation}
\label{eq:fm-loss}
\mathcal{L}_{\mathrm{FM}}(\theta) =
\mathbb{E}_{(c,\vx_1),\,\vx_0,\,t \sim \mathcal{U}[0,1]}
\Big[\ell\big(\dnz(I_t, t, c),\ \vx_1\big)\Big].
\end{equation}

Here, $\ell$ is the per-example loss. Thus, each training example supervises a denoising prediction at one sampled time; the model is never evaluated on a state produced by its own preceding update. We return to this distinction in
\Cref{sec:failure}.

\subsection{Flow Language Models}
Flow Language Models (FLMs) apply this formulation to discrete tokens. FLM \citep{FlowMapLM2026} represents the clean answer as a stack of one-hot vectors, $\vx_1 \in \{0,1\}^{L \times V}$. Its denoiser therefore predicts a distribution over the vocabulary at every answer position:
\begin{equation}
\dnz(\vx, t, c) =
\big(p^{i}(\cdot \mid I_t = \vx,\ c)\big)_{1 \leq i \leq L}.
\end{equation}
The model is trained with cross-entropy, and the evolving state $\vz_t \in \mathbb{R}^{L \times V}$ is moved toward this predicted distribution at each Euler step. In our experiments, FLM refers to the multi-step flow model, rather than the distilled one-step flow-map variant introduced in the same work.

\paragraph{S-FLM.}
S-FLM \citep{HypersphericalFlows2026} instead represents tokens with learned unit-norm embeddings on the hypersphere. It follows a geodesic path from noise toward these embeddings, rather than evolving a vocabulary-sized state as in FLM. Its denoiser still produces a distribution over the vocabulary; each sampling step follows a posterior-weighted average of geodesic directions toward the token embeddings. Both methods train a cross-entropy denoiser at one sampled point on
their respective noise paths. S-FLM additionally uses a truncated, adaptive noise schedule. ELF \citep{ELF2026} also defines the flow in a continuous token-embedding space.

FLM and S-FLM are our flow baselines. UFM also evolves a continuous latent state, but it predicts a latent target and applies the vocabulary decoder only once, at the endpoint. Its rollout is trained jointly with a terminal loss, as described in \Cref{sec:method}.

\section{Integration Steps and Reasoning Depth}

\subsection{A Two-Layer Flow for Graph Reachability}
\label{sec:theory}

We define the graph-reachability task and the flow construction used in the existence result. The theorem below shows that the construction solves the task; its Euler discretization will later show how many integration steps are needed. Notation is collected in \Cref{tab:notation}.

\paragraph{Graph and prompt.}
Let $\gph=(\vertices,\edges)$ be a directed graph with $\nv=|\vertices|\le\nmax$ vertices and $\nedge=|\edges|$ edges. Let $\rootnode\in\vertices$ be a designated starting vertex, called the
root, and let $\cand1,\cand2$ be two candidate destinations. Exactly one candidate, denoted $\ansv$, is reachable from $\rootnode$; we write $\wrongv$ for the other candidate.

We write $\dist(\rootnode,v)$ for the length of a shortest directed path from $\rootnode$ to $v$, and $\reach{k}:=\{v\in\vertices:\dist(\rootnode,v)\le k\}$  for the vertices reachable within $k$ hops. The sequence
$\reach{0}\subseteq\reach{1}\subseteq\cdots$ stops growing at the eccentricity of the root, $\diam:=\max\{\dist(\rootnode,v):v\text{ is reachable from }\rootnode\}.$ 
Thus, $\dist(\rootnode,\ansv)\le\diam$.

We also write
$\walk{k}(\rootnode,v)$ for the number of directed walks of length $k$ from $\rootnode$ to $v$.

Following \citet{zhu2025reasoning}, the prompt lists the directed edges and then asks which candidate is reachable (\Cref{fig:prompt} in the appendix):
\begin{equation*}
  \prm=(\tokBOS,\eg1,\eh1,\tokEdge,\ldots,\eg\nedge,\eh\nedge,
        \tokEdge,\tokQ,\cand1,\cand2,\tokR,\rootnode,\tokA).
\end{equation*}
The flow state $\vz_t$ evolves at the position marked by $\tokA$, where the final answer is decoded.

\paragraph{Representation.}
Let $V$ be the vocabulary size and $\dte$ the embedding dimension. Each token $v$ has an embedding $\emb{v}\in\mathbb{R}^{\dte}$. As in \citet{zhu2025reasoning}, we assume orthonormal token embeddings,
\[
  \embmat^\top\embmat=\mI,
  \qquad
  \embmat=[\emb{1},\ldots,\emb{V}],
  \qquad
  \dte\ge V.
\]

This lets $\langle\emb{v},\vz_t\rangle$ recover the coefficient of vertex $v$ in the state. This assumption applies solely to the construction; the trained models use standard embeddings.

The first layer stores each edge's source and target embeddings in separate buffers at its $\tokEdge$ position, and the two candidate embeddings in buffers at $\tokA$. It initializes the answer state as $\vz_0=\emb{\tokA}+\emb{\rootnode}$ (\Cref{lem:chooser,lem:cache}).

\paragraph{Constructed velocity field.}
The second layer is a softmax-free linear-attention head
(\Cref{def:prop}). It reads the content slot of $\tokA$ as its query and the cached source and target edge embeddings as its keys and values. Using these cached edge endpoints, it applies $\Prop=\sum_{(\eg j\to\eh j)\in\edges}\emb{\eh j}\emb{\eg j}^{\top}$, the adjacency operator of $\gph$ in embedding coordinates. With a fixed gain $\gain>0$, the head induces

\begin{equation}
\label{eq:construction-ode}
  \dot{\vz}_t=\gain\Prop\vz_t,
  \qquad
  \vz_0=\emb{\tokA}+\emb{\rootnode}.
\end{equation}

We use a linear head where \citet{zhu2025reasoning} use softmax attention because it makes the velocity linear in the state, so the flow has a closed form. At time $t=1$, the construction compares
$\langle\emb{\cand1},\vz_1\rangle$ and
$\langle\emb{\cand2},\vz_1\rangle$ and returns the candidate with the larger score. This is a candidate-restricted readout. The experiments use the stricter full-vocabulary decoder; we discuss that difference in \Cref{rem:decoding}.

\paragraph{Exact flow}

The constructed velocity field has a closed-form solution. This gives the existence result: the flow separates the reachable candidate from the unreachable one on every instance.

\begin{figure}[!t]
  \centering
  \resizebox{0.9\textwidth}{!}{%
    \begin{tikzpicture}[
        scale=0.8, transform shape,
        font=\sffamily\small,
        tokenbox/.style={
          rectangle, rounded corners=1.5pt, fill=nodecolor, draw=nodecolor!80!black, thick,
          minimum width=1.6cm, minimum height=0.6cm, align=center, font=\large
        },
        promptbox/.style={
          rectangle, rounded corners=2pt, fill=matrixcolor, draw=matrixcolor!80!black, thick,
          minimum width=1.6cm, minimum height=0.5cm, align=center, font=\large
        },
        arrow/.style={-{Stealth[length=2.5mm, width=2.5mm]}, thick, draw=arrowcolor},
        diffarrow/.style={-{Stealth[length=2.5mm, width=2.5mm]}, ultra thick, draw=arrowcolor}
      ]
 
      \node[rectangle, rounded corners=2pt, fill=tokencolor, draw=tokencolor!80!black, thick,
            minimum height=3cm, minimum width=1.4cm, align=center]
        (noise) at (-3, 0.4) {Gaussian\\Noise\\[2mm] $\mathcal{N}(0, \mI_d)$};
 
      \begin{scope}[shift={(0,0)}]
        \node[promptbox]                        (prompt0) at (0,  2.5) {Prompt $c$};
        \node[tokenbox]                         (z0_1)    at (0,  1.4) {$z_0^{(1)}$};
        \node[tokenbox]                         (z0_2)    at (0,  0.6) {$z_0^{(2)}$};
        \node[font=\normalsize, text=gray!80]   (z0_dots) at (0,  0)   {$\vdots$};
        \node[tokenbox]                         (z0_T)    at (0, -0.6) {$z_0^{(L)}$};
        \node[below=0.2cm of z0_T, align=center, font=\large]
          {Time $0$\\[0.5mm] $z_0 \in \mathbb{R}^{L \times d}$};
 
        \draw[arrow] (prompt0) -- (z0_1);
      \end{scope}
 
      \begin{scope}[shift={(3.5,0)}]
        \node[promptbox]                        (promptt) at (0,  2.5) {Prompt $c$};
        \node[tokenbox]                         (zt_1)    at (0,  1.4) {$z_t^{(1)}$};
        \node[tokenbox]                         (zt_2)    at (0,  0.6) {$z_t^{(2)}$};
        \node[font=\normalsize, text=gray!80]   (zt_dots) at (0,  0)   {$\vdots$};
        \node[tokenbox]                         (zt_T)    at (0, -0.6) {$z_t^{(L)}$};
        \node[below=0.2cm of zt_T, align=center, font=\large]
          {Time $t$\\[0.5mm] $z_t \in \mathbb{R}^{L \times d}$};
 
        \draw[arrow] (promptt) -- (zt_1);
      \end{scope}
 
      \begin{scope}[shift={(7,0)}]
        \node[promptbox]                        (promptT) at (0,  2.5) {Prompt $c$};
        \node[tokenbox]                         (zT_1)    at (0,  1.4) {$z_1^{(1)}$};
        \node[tokenbox]                         (zT_2)    at (0,  0.6) {$z_1^{(2)}$};
        \node[font=\normalsize, text=gray!80]   (zT_dots) at (0,  0)   {$\vdots$};
        \node[tokenbox]                         (zT_T)    at (0, -0.6) {$z_1^{(L)}$};
        \node[below=0.2cm of zT_T, align=center, font=\large]
        {Time $1$\\[0.5mm] $z_1 \in \mathbb{R}^{L \times d}$}; 
        \draw[arrow] (promptT) -- (zT_1);
      \end{scope}
 
      \node[align=center, inner sep=0pt] (dec_text) at (10.5, 0.4)
        {\small Decode\\[0.5mm] \small $d \to V$};
      \begin{scope}[on background layer]
        \filldraw[fill=matrixcolor, draw=arrowcolor, thick, rounded corners=1pt]
          (dec_text.center) ++(-0.9, 1.3) -- ++(1.8, 1.0) -- ++(0, -4.6) -- ++(-1.8, 1.0) -- cycle;
        \coordinate (dec_left)  at ($(dec_text.center) + (-0.9, 0)$);
        \coordinate (dec_right) at ($(dec_text.center) + ( 0.9, 0)$);
      \end{scope}
 
      \node[rectangle, rounded corners=2pt, fill=nodecolor, draw=nodecolor!80!black, thick,
            minimum height=4.6cm, minimum width=0.6cm, align=center, anchor=west]
        (softmax) at (dec_right) {\rotatebox{90}{Softmax}};
 
      \node[rectangle, rounded corners=2pt, fill=tokencolor, draw=tokencolor!80!black, thick,
            minimum height=4.6cm, minimum width=1.4cm, align=center, right=0.8cm of softmax]
        (vocab) {Vocab\\Space\\[2mm] $\mathbb{R}^V$};
 
      \draw[arrow] (noise.east) to[out=0, in=180] (z0_1.west);
      \draw[arrow] (noise.east) to[out=0, in=180] (z0_2.west);
      \draw[arrow] (noise.east) to[out=0, in=180] (z0_T.west);
 
      \begin{scope}[shift={(1.1, 0)}]
        \draw[arrow, draw=arrowcolor!80, thin] (0,  1.2) to[out= 10, in=170] (1.3,  1.1);
        \draw[arrow, draw=arrowcolor]          (0,  0.6) to[out=-15, in=195]
          node[above, font=\sffamily\scriptsize, text=arrowcolor, yshift=1mm] {Latent Flow} (1.3, 0.6);
        \draw[arrow, draw=arrowcolor!80, thin] (0,  0.0) to[out= 15, in=165] (1.3,  0.1);
        \draw[arrow, draw=arrowcolor!80, thin] (0, -0.6) to[out=-10, in=190] (1.3, -0.5);
      \end{scope}
 
      \begin{scope}[shift={(4.6, 0)}]
        \draw[arrow, draw=arrowcolor!80, thin] (0,  1.1) to[out= 15, in=165] (1.3,  1.2);
        \draw[arrow, draw=arrowcolor]          (0,  0.6) to[out=-10, in=190]
          node[above, font=\sffamily\scriptsize, text=arrowcolor, yshift=1mm] {Latent Flow} (1.3, 0.6);
        \draw[arrow, draw=arrowcolor!80, thin] (0,  0.1) to[out= 20, in=160] (1.3,  0.0);
        \draw[arrow, draw=arrowcolor!80, thin] (0, -0.5) to[out=-15, in=195] (1.3, -0.6);
      \end{scope}
 
      \draw[arrow] (8.1, 0.4) -- (dec_left);
      \draw[arrow] (softmax.east) -- (vocab.west);
 
    \end{tikzpicture}%
  }
  \caption{\textbf{UFM computes in latent space and decodes once.} At inference, a DiT repeatedly updates a noisy latent answer state from $t=0$ to $t=1$. A linear vocabulary decoder is applied only to the final state.}
  \label{fig:pipeline}
\end{figure}

\begin{highlight}
\begin{theorem}[Existence]
\label{thm:existence}

Fix a vocabulary of size $V$, $\nmax$, and $\gain>0$.
There exists a two-layer Transformer, consisting of a
prompt-caching layer followed by a linear-attention layer,
whose weights are independent of the graph, root, and
candidate pair. Its answer state satisfies
\Cref{eq:construction-ode} for every directed graph with $|\vertices|\le\nmax$, every root $\rootnode$, and every candidate pair
with exactly one reachable candidate $\ansv$. At time $t=1$,
\[
  \langle\emb{v},\vz_1\rangle
  =
  \sum_{k\ge0}\frac{\gain^k}{k!}\,\walk{k}(\rootnode,v),
  \qquad v\in\vertices.
\]
This coefficient is positive exactly when $v$ is reachable from
$\rootnode$. The candidate-restricted readout therefore returns $\ansv$,
with the margin bound
\[
  \mgn:=\min_{0\le k\le\nmax-1}\frac{\gain^k}{k!}
  \le
  \langle\emb{\ansv},\vz_1\rangle
  -\langle\emb{\wrongv},\vz_1\rangle.
\]
\end{theorem}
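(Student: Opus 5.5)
The proof splits into two parts. The first is architectural: realize \Cref{eq:construction-ode} with graph-independent weights. The second is analytic: solve the linear ODE in closed form and read off the coefficients.

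For the architecture, I would invoke \Cref{lem:chooser,lem:cache} for the first layer. These give, at each $\tokEdge$ position, buffers holding $\emb{\eg j}$ and $\emb{\eh j}$, and at the $\tokA$ position the candidate buffers together with the initial state $\vz_0=\emb{\tokA}+\emb{\rootnode}$. For the second layer I would use the linear-attention head of \Cref{def:prop}. Its query is the content slot of $\tokA$, i.e.\ $\vz_t$. Keys read the source buffer and values read the target buffer at every $\tokEdge$ position. Keys and values at non-edge positions are projected to zero, so they contribute nothing.

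Without softmax, the head output is
\[
\gain\sum_j \emb{\eh j}\langle\emb{\eg j},\vz_t\rangle=\gain\Prop\vz_t .
\]
All projection matrices are fixed selections of buffer coordinates, scaled by $\gain$, and never depend on $\gph$, $\rootnode$ or the candidates. The layer's weights are therefore universal. Taking this output as the velocity gives \Cref{eq:construction-ode}.

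For the analytic part, the ODE is linear with constant coefficients, so $\vz_1=e^{\gain\Prop}\vz_0=\sum_{k\ge0}\frac{\gain^k}{k!}\Prop^k\vz_0$. By orthonormality of $\embmat$, $\Prop^\top$ acts on embedding coordinates as the adjacency matrix of $\gph$. Induction on $k$ then gives $\langle\emb{v},\Prop^k\emb{\rootnode}\rangle=\walk{k}(\rootnode,v)$.

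The $\emb{\tokA}$ summand must not contaminate vertex coordinates. This holds because $\tokA$ is never an edge source, so $\Prop\emb{\tokA}=0$ and only its $k=0$ term $\emb{\tokA}$ survives, which is orthogonal to every $\emb{v}$. This yields the displayed series for $\langle\emb{v},\vz_1\rangle$.

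For positivity, every term is nonnegative. A term is nonzero iff some walk of length $k$ exists, which happens iff $v$ is reachable. Moreover, reachability is always witnessed by a shortest path with $k=\dist(\rootnode,v)\le\nv-1\le\nmax-1$.

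For the margin, $\wrongv$ is unreachable, so its coefficient is $0$. The coefficient of $\ansv$ is at least its single term at $k=\dist(\rootnode,\ansv)$. That term is at least $\frac{\gain^k}{k!}\cdot 1\ge\mgn$, since $\walk{k}\ge1$ and $0\le k\le\nmax-1$. The candidate-restricted readout therefore returns $\ansv$.

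The main obstacle is the first part. The head must be exactly linear and must pick up only edge positions. The time conditioning and the clean prompt must also not perturb the velocity. I would handle this by giving the head zero value projection outside the buffer coordinates. I would also check that \Cref{def:prop} includes the residual and output projection so that the answer-slot update is exactly $\gain\Prop\vz_t$, with no extra identity term.
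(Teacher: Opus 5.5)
Your proposal is correct and follows the paper's proof step for step: \Cref{lem:cache} for $\vz_0=\emb{\tokA}+\emb{\rootnode}$ and the edge buffers, \Cref{def:prop} for the velocity $\gain\Prop\vz_t$, the identity $\Prop\emb{\tokA}=\mZero$, and the walk-counting expansion of \Cref{lem:walks} for the coefficients, the positivity claim, and the margin. One point to tighten: the projection matrices are the same at every position, so non-edge positions drop out not because of a position-dependent projection but because \Cref{lem:cache} guarantees that buffer~1, the key, is exactly zero there (including at $\tokBOS$ and $\tokA$) after thresholding.
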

\end{highlight}

\paragraph{Why the exact flow solves reachability.}
Since $\Prop\emb{\tokA}=\mZero$, the solution is
$\vz_t=\emb{\tokA}+e^{\gain t\Prop}\emb{\rootnode}$.
Orthonormality makes $\Prop^k\emb{\rootnode}$ count length-$k$ walks
(\Cref{lem:walks}), giving the theorem's nonnegative coefficient series.
An unreachable vertex has no such walks; a reachable vertex has a
shortest path of length $k^\star\le\nmax-1$, contributing at least
$\gain^{k^\star}/k^\star!\ge\mgn$ at $t=1$.
The network realization is established in \Cref{lem:cache,def:prop}.

Unlike the Euler rollout considered next, the exact flow assigns positive mass to every reachable vertex at any $t>0$. Reaching vertices one hop per step is therefore a property of the discretization, not of the flow itself.

\begin{corollary}[Euler steps match graph distance]
\label{cor:euler}
For any Euler grid $0=t_0<\cdots<t_N=1$ with positive step sizes,
\[
  \vz_{t_N}
  =
  \emb{\tokA}
  +
  \prod_{i=0}^{N-1}
  \bigl(\mI+\gain\Delta t_i\Prop\bigr)\emb{\rootnode}.
\]
Hence, for every vertex $v$,
\[
  \langle\emb{v},\vz_{t_N}\rangle>0
  \quad\Longleftrightarrow\quad
  v\in\reach{N}.
\]

Thus, the candidate-restricted readout is correct when
$N\ge\dist(\rootnode,\ansv)$. Taking
$N\ge\diam=\ecc(\rootnode)$ suffices for every valid candidate
pair with this root.

\end{corollary}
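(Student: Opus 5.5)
The plan is to unroll the Euler recursion \Cref{eq:euler} for the linear velocity field of \Cref{eq:construction-ode}, which by \Cref{thm:existence} is exactly $\gain\Prop\vz$. One Euler step reads $\vz_{t_{i+1}}=(\mI+\gain\Delta t_i\Prop)\vz_{t_i}$ with $\Delta t_i=t_{i+1}-t_i>0$. Since $\Prop\emb{\tokA}=\mZero$ (the token $\tokA$ is orthogonal to every edge-source embedding), each factor fixes $\emb{\tokA}$. Starting from $\vz_0=\emb{\tokA}+\emb{\rootnode}$, a straightforward induction on $N$ therefore gives the stated product formula.

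Next we expand the product. The factors $\mI+\gain\Delta t_i\Prop$ are polynomials in the single matrix $\Prop$, so they commute, and
\[
  \prod_{i=0}^{N-1}\bigl(\mI+\gain\Delta t_i\Prop\bigr)=\sum_{k=0}^{N}\gain^k e_k(\Delta t_0,\ldots,\Delta t_{N-1})\,\Prop^k,
\]
where $e_k$ is the $k$-th elementary symmetric polynomial. Because every $\Delta t_i>0$, we have $e_k>0$ for $0\le k\le N$, and there is no term with $k>N$. Pairing with $\emb{v}$ for a vertex $v$ kills the $\emb{\tokA}$ term by orthonormality. \Cref{lem:walks} gives $\langle\emb{v},\Prop^k\emb{\rootnode}\rangle=\walk{k}(\rootnode,v)$, so
\[
  \langle\emb{v},\vz_{t_N}\rangle=\sum_{k=0}^{N}\gain^k e_k\,\walk{k}(\rootnode,v).
\]
This is a sum of nonnegative terms with strictly positive weights. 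It is positive if and only if some walk of length $k\le N$ from $\rootnode$ to $v$ exists. Walks from $\rootnode$ reach exactly the vertices reachable by paths, since shortening a walk to a path only reduces its length. The condition is therefore equivalent to $\dist(\rootnode,v)\le N$, that is, $v\in\reach{N}$.

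For the readout claim, note that $\wrongv$ is unreachable, so its score is $0$. If $N\ge\dist(\rootnode,\ansv)$, the score of $\ansv$ is strictly positive, and the candidate-restricted readout returns $\ansv$. Finally, $\dist(\rootnode,\ansv)\le\diam$ holds by the definition of $\diam$, so $N\ge\diam$ suffices uniformly over valid candidate pairs.

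The main point needing care is the walk-counting identity together with strict positivity. The strictness rests entirely on $e_k>0$ for $k\le N$, which uses the positive step sizes, and on nonnegativity of the walk counts, so that no cancellation occurs. Everything else is routine linear algebra already supplied by \Cref{lem:walks} and the construction.
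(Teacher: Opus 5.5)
Your proposal is correct and follows essentially the same route as the paper: you unroll the Euler recursion using $\Prop\emb{\tokA}=\mZero$, expand the product into $\sum_{k=0}^{N}a_k\Prop^k$ with strictly positive coefficients, and apply \Cref{lem:walks} to get positivity exactly on $\reach{N}$. Your elementary symmetric polynomials $\gain^k e_k$ simply make the paper's coefficients $a_k$ explicit, and the readout and $\diam$ claims then follow as you describe.
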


Expanding the product in \Cref{cor:euler} gives positive coefficients
for $\Prop^0,\ldots,\Prop^N$. The state therefore contains contributions
from walks of length at most $N$, and its vertex support is exactly
$\reach{N}$. Step sizes change the weights on these walks and the number
of steps determines how far information can propagate: a target farther from the root requires more Euler steps, using the same weights.

\begin{remark}[Noisy initialization]
\label{rem:gaussian-main}
For $\vz_0=\emb{\tokA}+\emb{\rootnode}+\sigma\nse$, with
$\nse\sim\mathcal{N}(0,\mI)$, the exact time-$1$ flow returns the
correct candidate with probability at least
\[
  1-\exp\!\left(
  -\frac{\mgn^2}{4\sigma^2e^{2\gain\sqrt{\nedge}}}
  \right).
\]
The bound weakens with noise and graph size; see \Cref{app:gaussian}.
\end{remark}

The construction uses orthonormal embeddings, a softmax-free propagation
head, a prompt cache, and a time-independent velocity field.
It does not show that gradient descent recovers this solution. Trained
UFM uses learned embeddings and reapplies its DiT at every step;
\Cref{sec:superposition} examines whether its dynamics show a related
mechanism.


\subsection{Integration Steps Alone Do Not Produce Reasoning}
\label{sec:failure}

We test whether increasing the number of integration steps at
inference improves reasoning accuracy in standard flow models.

\paragraph{Setup.}

\begin{wrapfigure}[26]{r}{0.54\columnwidth}
  \centering
  \includegraphics[width=\linewidth]{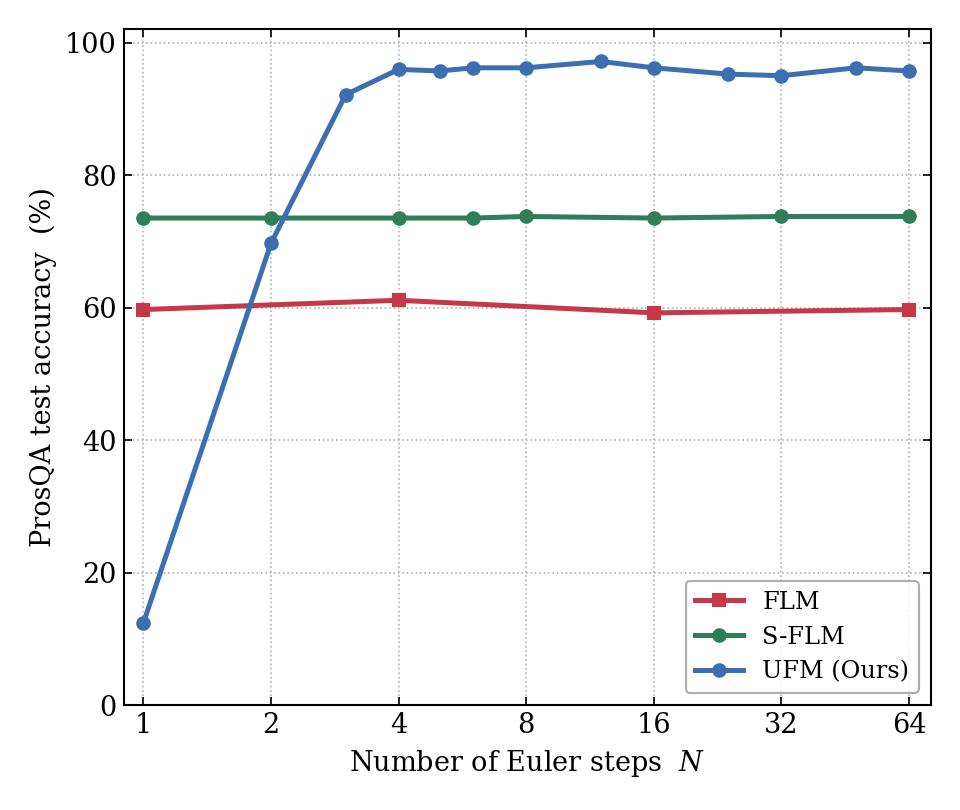}
 \caption{\textbf{Rollout training makes additional steps useful.}
ProsQA test accuracy versus the number of Euler steps at inference.
FLM and S-FLM remain nearly flat, while a single UFM trained with
$\Ntrain=5$ improves from $12\%$ at $N=1$ to approximately $97\%$
with longer rollouts. Decoding uses the full vocabulary.}
  \label{fig:teaser}
\end{wrapfigure}

As in \citet{zhu2025reasoning}, we use ProsQA \citep{Hao2024Coconut}, a directed graph-reachability task
with two candidate answers. The models have two layers and approximately
$15.7$M parameters. The plotted UFM is trained with
$\Ntrain=5$ steps and gradients through all five updates. UFM and S-FLM have approximately
matched training budgets measured by the total number of model evaluations
across examples; configurations are given in \Cref{app:exp}.

\Cref{fig:teaser} shows test accuracy as the number of Euler steps $N$ increases at inference. Both baseline curves are nearly flat: more integration steps add little accuracy. We revisit the same baselines on the harder Sudoku and Maze benchmarks in
\Cref{sec:flow-comparison}. On Sudoku-Hard, the $28.6$M baselines change by
only about one point from $N=8$ to $N=128$ (\Cref{tab:acc-vs-n}). Consequently, standard flow models require more than additional forward passes to improve reasoning.

This behavior is consistent with the standard flow objective in
\Cref{eq:fm-loss}. FLM and S-FLM train the denoiser to recover the answer
from a ground-truth interpolant at a sampled time. The model is therefore
not trained on the states created by its own preceding updates, and the
objective does not assess whether one update leaves a useful state for the
next. UFM instead trains a sequence of updates against one terminal answer
loss.

S-FLM already evolves an embedding-space state; yet, its accuracy changes
little with additional steps. Our construction motivates keeping the
computation in latent space and decoding only at the endpoint;
\Cref{sec:method} trains these updates jointly through the rollout.


\section{Unrolled Flow Model}
\label{sec:method}

We want early updates to produce states that later updates can use to reach the answer. We therefore train an \emph{Unrolled Flow Model} (UFM) 
through its own rollout with a loss on the final answer. UFM keeps the answer state in a continuous latent
space and decodes only once, at the endpoint.

\paragraph{Latent rollout.}
$\vz_t\in\mathbb{R}^{L\times d}$, where $d$ is the hidden width. A DiT predicts a latent target, $\dnz(\vz_t,t,c)\in\mathbb{R}^{L\times d}$, and defines the velocity as in \Cref{eq:velocity}. 


Unlike FLM and S-FLM, intermediate states are not decoded into distributions over the vocabulary. The vocabulary decoder is applied only to the final state,
\[
  \hat{\vy}
  =
  \operatorname{softmax}\!\bigl(\vz_{t_N}\Wdec^\top\bigr),
  \qquad
  \Wdec\in\mathbb{R}^{V\times d}.
\]

We use softmax on ProsQA and replace it with StableMax \citep{prieto2025stablemax} on the longer Sudoku and
Maze tasks. Details are in \Cref{app:long-rollout}.

\paragraph{Terminal rollout loss.}

We apply the loss only at the endpoint, so backpropagation
trains successive updates through their combined effect on
the final answer.

During training, the rollout starts from a partially noised answer at a sampled time $t_0$, $\vz_{t_0}=(1-t_0)\nse+t_0\operatorname{Embed}(\vy)$ with $\nse\sim\mathcal{N}(0,\mI)$, and is integrated over a sampled subinterval ending at $t_N=1$.

\begin{equation}
\label{eq:terminal-loss}
  \mathcal{L}_{\mathrm{UFM}}(\theta)
  =
  \mathbb{E}_{(c,\vy),\,\nse,\,t_0}
  \left[
    \operatorname{CE}\!\left(
      \operatorname{softmax}(\vz_{t_N}\Wdec^\top),\,
      \vy
    \right)
  \right].
\end{equation}
No intermediate state receives a vocabulary loss. At inference, we set $t_0=0$, start from noise, and integrate the full interval $[0,1]$. The complete long-rollout training and inference procedures, including sphere
retraction and truncated backpropagation, are given in \Cref{app:long-rollout}.

\begin{algorithm}[t]
  \caption{Base UFM training on one batch. Task-specific variants
  are described in the text.}
  \label{alg:latent_velocity_matching}
  \begin{algorithmic}[1]
    \REQUIRE Batch $(c,\vy)$, parameters $\theta$,
    rollout length $\Ntrain$,
    backpropagation length $1\le\Nback\le\Ntrain$

    \STATE Sample noise $\nse\sim\mathcal{N}(0,\mI)$
    in the latent answer space
    \STATE Sample a task-specific grid
    $0\le t_0<t_1<\cdots<t_{\Ntrain}=1$
    \STATE $\vz_{t_0}\gets
    (1-t_0)\nse+t_0\operatorname{Embed}(\vy)$

    \FOR{$k=0$ \TO $\Ntrain-1$}
      \STATE $\Delta t_k\gets t_{k+1}-t_k$
      \STATE $\dnz^{(k)}\gets\dnz(\vz_{t_k},t_k,c)$
      \STATE $\vfield^{(k)}\gets
      (\dnz^{(k)}-\vz_{t_k})/(1-t_k)$
      \STATE $\vz_{t_{k+1}}\gets
      \vz_{t_k}+\Delta t_k\,\vfield^{(k)}$
      \IF{$k<\Ntrain-\Nback$}
        \STATE $\vz_{t_{k+1}}\gets
        \operatorname{sg}(\vz_{t_{k+1}})$
        \COMMENT{stop gradients through early updates}
      \ENDIF
    \ENDFOR

    \STATE $\hat{\vy}\gets
    \operatorname{softmax}(\vz_{t_{\Ntrain}}\Wdec^\top)$
    \COMMENT{decode once}
    \STATE $\mathcal{L}\gets
    \operatorname{CE}(\hat{\vy},\vy)$
    \RETURN $\mathcal{L},\nabla_\theta\mathcal{L}$
  \end{algorithmic}
\end{algorithm}


\subsection{What the Intermediate States Carry}
\label{sec:superposition}

\begin{figure}[!t]
  \centering
  \includegraphics[width=\fullwidth]{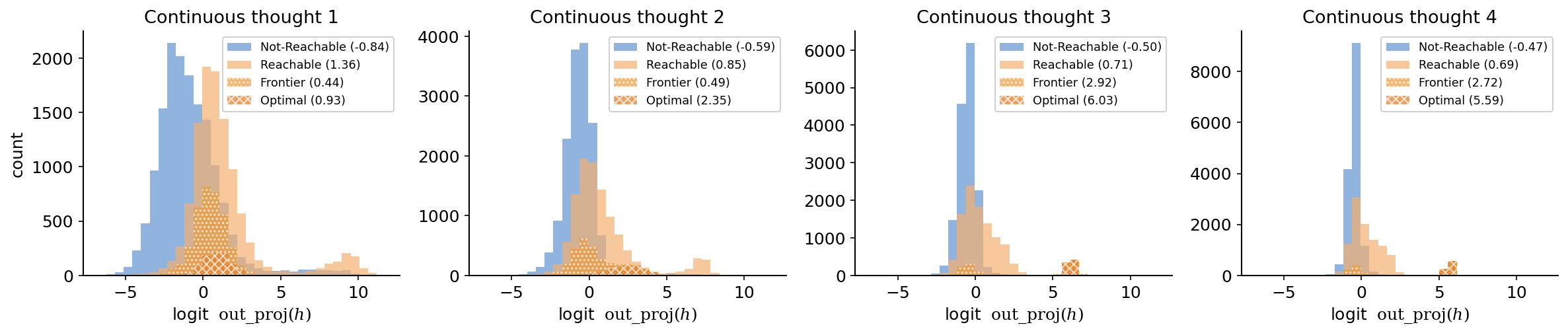}
  \caption{\textbf{Intermediate states separate reachable vertices.}
  We plot the logit $\langle\vz_t,\vw_v\rangle$ of vertex $v$, where $\vw_v$
  is row $v$ of the final decoder $\Wdec$. Logits are centered separately for
  each problem by their mean over vertices. UFM trained with $\Ntrain=5$ is evaluated at $t=0.2,0.4,0.6,0.8$, on the $217$ ProsQA test problems with $\dist(\rootnode,\ansv)=4$, averaged
  over four seeds. Vertices are grouped as \emph{not reachable},
  \emph{reachable}, \emph{frontier} ($\dist(\rootnode,v)=k$), and
  \emph{optimal} (the vertex at hop $k$ on a shortest root-to-answer path).}
  \label{fig:superposition}
\end{figure}

Following \Cref{sec:theory}, we inspect the logit
$\langle\vz_t,\vw_v\rangle$ of each vertex $v$, where $\vw_v$ is its
output embedding. In \Cref{fig:superposition}, non-reachable vertices remain near zero at every
step, while several reachable vertices receive positive logits. At the first
step, no single reachable vertex is clearly selected. The reachable group sits above the not-reachable one
and drifts further right as $t$ grows, and within it the optimal node separates the most.

While it does not establish that the trained model implements the construction, this qualitatively resembles \Cref{cor:euler}, whose Euler states have
support on all vertices reachable within $k$ hops after $k$ steps.



\section{Scaling to Long Rollouts}
\label{sec:scaling}

We evaluate UFM on Sudoku-Hard, Sudoku-Extreme, and Maze-Hard.
These tasks require longer answer sequences and longer
rollouts than ProsQA.

\subsection{Benchmarks and evaluation protocol}

\textbf{Sudoku-Hard.}
We use the split of \citet{HypersphericalFlows2026}: $48$k training
puzzles and $2$k reporting puzzles, each with $30$ givens.
UFM checkpoints are selected on an additional $2$k validation split,
disjoint from both sets. Our best FLM and S-FLM checkpoints are selected
on the reporting set.

\textbf{Sudoku-Extreme.}
Following \citet{wang2025hrm,JolicoeurMartineau2025}, training uses
$1$k base puzzles with $1000$ augmentations each. We evaluate on the full
test set of $422{,}786$ puzzles. Checkpoint selection uses $2$k additional puzzles from the
original training pool, disjoint from the selected $1$k base
training puzzles and the test set. Each answer contains $81$ positions.

\textbf{Maze-Hard.}
We use the $30\times30$ benchmark of
\citet{wang2025hrm,JolicoeurMartineau2025}, with $1000$ training and
$1000$ test mazes whose shortest paths exceed $110$ cells.
The model predicts a $900$-cell grid, of which roughly $110$ cells lie on the solution path. UFM and both flow baselines select checkpoints on the test set; no separate validation split is used.

\subsection{Training long rollouts}
\paragraph{Sphere retraction.}

On long Sudoku rollouts, the latent norm can grow sharply
(\Cref{app:sphere}). This changes the scale of the inputs to
later updates and can destabilize the rollout. We control this
growth on Sudoku and Maze with a position-wise sphere retraction,

\[
  \psi(\vz)=\sqrt{d}\,\frac{\vz}{\lVert\vz\rVert_2}.
\]
Let $\vz_{t_k}$ denote the state read by the DiT and
$y_{t_k}=\psi(\vz_{t_k})$ its retracted version. The update is 
\[
  \vz_{t_{k+1}} = y_{t_k} +\frac{\Delta t_k}{1-t_k} \bigl(\dnz(\vz_{t_k},t_k,c)-y_{t_k}\bigr), \qquad y_{t_{k+1}} = \psi(\vz_{t_{k+1}}), \text{ for } 1\le k<N .
\]
This is a projected Euler-type update: after the first step, its base is
retracted, while the DiT continues to read the raw state. We use this retraction on Sudoku and Maze only. In a separate Sudoku
ablation, training without it lowers accuracy; details are in
\Cref{app:sphere}.

\paragraph{Truncated Backpropagation}

For the long tasks, we reduce activation memory by backpropagating only
through the final $\Nback=6$ of $\Ntrain=24$ updates, as in
\citet{JolicoeurMartineau2025,clark2024draft}. The first $18$ updates
are computed without recording gradients and produce the state refined
by the final six. We choose $\Nback=6$ to fit the available memory.

Each training example uses a newly sampled increasing time grid ending at
$t=1$, exposing the model to varying step sizes. The complete
long-rollout procedures are given in \Cref{app:long-rollout}.

\subsection{Comparison with Flow Baselines}
\label{sec:flow-comparison}

We compare all methods at $\Neval=128$ on Sudoku-Hard
\citep{HypersphericalFlows2026}. FLM and S-FLM use the authors'
released code and the best tested checkpoints and final-step settings
at this budget (\Cref{app:sflm-replication}).

\begin{table}[t]
\centering
\caption{Accuracy (\%) on Sudoku-Hard. FLM and S-FLM are our runs
of the authors' released code. Train is training-only wall-clock time on one
NVIDIA A100. All methods use the same training data.}
\label{tab:sflm_hard}
\begin{tabular}{lccc}
\toprule
Method & Params & Train (h) & Sudoku-Hard \\
\midrule
FLM \citep{FlowMapLM2026}$^{a}$                        & $28.6$M & $1.6$ & 51.9 \\
S-FLM \citep{HypersphericalFlows2026}$^{a}$ & $28.6$M & $1.7$ & 50.9$^{b}$ \\
\midrule
UFM (ours) & $8.4$M & $1.1$ & $\mathbf{86.9}\pm0.9$ \\
\bottomrule
\end{tabular}
\vspace{0.4em}
\parbox{\linewidth}{
$^{a}$Selected on the $2$k reporting puzzles; UFM is selected on a separate
$2$k validation split. The UFM result is the mean and standard deviation over five independent
noise seeds at $\Neval=128$ on a uniform grid.
$^{b}$With ancestral noise removal at $\Neval=128$; with the default greedy
final step, S-FLM peaks at $50.2\%$ at $N=64$
(\Cref{tab:acc-vs-n}).}
\end{table}

On Sudoku-Hard, UFM reaches $86.9\pm0.9\%$, compared with $51.9\%$
for FLM and $50.9\%$ for S-FLM, using $8.4$M rather than
$28.6$M parameters. 

The gap widens on the harder tasks. On Sudoku-Extreme, UFM reaches
$74.4\pm0.0\%$ with a uniform $\Neval=128$ grid, averaged over five noise
seeds. FLM and S-FLM reach $10.7\%$ and $9.4\%$, respectively. On
Maze-Hard, UFM reaches $89.3\pm0.5\%$, compared with $40.4\%$ for FLM and
$49.9\%$ for S-FLM (\Cref{tab:main-single}). Published recursive-reasoner results and FRM are included for context in \Cref{tab:main-single,tab:main-multi}.

\subsection{Selecting Among Rollouts}
\label{sec:chooser}

Inference begins from Gaussian noise, so different rollouts can produce
different answers. We therefore draw $K=100$ rollouts and select one with a
parameter-free score. \emph{Margin} averages the gap between the two largest raw logits over
output positions. It is an output-space analogue of $\mgn$ in
\Cref{thm:existence}.

On mazes, scores average over $900$ cells, although only roughly $110$
describe the solution path. This makes selection harder, and selected
accuracy remains below Pass@100. Detailed
diagnostics are in \Cref{app:chooser}.

\section{Related work}

\paragraph{Recursive and looped reasoning.}
Recursive reasoners repeatedly apply a small, weight-tied network to a latent state. HRM \citep{wang2025hrm} uses slow and fast recurrent modules, while TRM \citep{JolicoeurMartineau2025} uses a single recurrent block and an answer state. Subsequent work improves stability and scaling \citep{fprm,attractor,urm}.
EqR and GRAM explore both depth, through more iterations, and breadth,
through multiple trajectories \citep{eqr,gram}. PTRM also studies sampling
and selection \citep{ptrm}.

\paragraph{Latent chain of thought.}
Coconut \citep{Hao2024Coconut} feeds a language model's hidden state back as its next input embedding and introduces ProsQA. \citet{zhu2025reasoning} show that a two-layer Transformer with continuous thoughts can solve graph reachability by maintaining a superposition of reachable vertices and expanding it over several steps. Their result provides the starting point for our construction in \Cref{sec:theory}. UFM updates one latent answer state in place and uses answer-only
supervision throughout training. Coconut appends continuous thoughts
and uses a curriculum that gradually replaces language reasoning steps.

\paragraph{Differentiating through generation.}
Backpropagating a terminal objective through a generative sampler is established in diffusion reward fine-tuning. DRaFT \citep{clark2024draft} backpropagates differentiable rewards through a diffusion sampler and introduces DRaFT-\(K\), which truncates gradients to the final \(K\) sampling steps. 
UFM uses the same general optimization mechanism, but is trained from scratch with endpoint cross-entropy on reasoning tasks rather than fine-tuned against an external reward. Our \(\Nback\) plays the role of a truncated sampler-backpropagation horizon.

\paragraph{Flow models for reasoning.}
Flow Reasoning Models (FRM) \citep{helbling2026frm} add recurrent
self-conditioning to a flow model. Their Fixed-Point Forcing procedure uses a
carry produced by the model's own rollout, while stopping gradients through that
carry. They also report that increasing the number of sampling steps of a
standard FLM does not improve Sudoku performance, consistent with our ProsQA
observation in \Cref{sec:failure}. FRM reports a peak accuracy of $99.5\%$ on a fixed
$1000$-puzzle subset of the Sudoku-Extreme test set,
included for context in \Cref{tab:main-single}.
Its maze evaluation uses Maze-Unique rather than Maze-Hard.

Looped Flows \citep{suleymanzade2026thinkingloopedflows}, concurrent with this
work, follows the TRM preprocessing and evaluation protocol on Sudoku-Extreme
and Maze-Hard. It trains recurrent flow updates with local denoising losses and
stops gradients between updates. UFM instead applies one terminal loss and
backpropagates through the final $\Nback$ updates of the rollout.

\section{Limitations}
Our experiments use relatively small models, from $8$M to $16$M parameters, on
synthetic structured reasoning tasks. Training through a rollout requires memory
for the backpropagated suffix, which limited model size in this work. Extending
UFM to larger models and less structured language-reasoning tasks will require
more memory-efficient ways to train consecutive updates jointly.

The construction in \Cref{sec:theory} is an existence result. It relies on
orthonormal embeddings and a linear propagation head, and does not show that
gradient descent recovers this velocity field. The trained models share only a
qualitative pattern with the construction, examined in
\Cref{sec:superposition}.

Finally, a rollout begins from Gaussian noise, so a correct answer may appear
among several trajectories without being selected by a single rollout. Our
parameter-free selection rule recovers most of the Pass@100 gap on
Sudoku-Extreme, but is less reliable for the much longer Maze-Hard outputs and could benefit from a learned correctness or halting head, as used
in HRM and TRM \citep{wang2025hrm,JolicoeurMartineau2025}.
Although UFM substantially improves over standard flow baselines, it remains
behind the strongest reported recursive reasoners on Sudoku-Extreme.

\section{Conclusion}

We show how flow models can use additional integration steps
to improve reasoning. On graph reachability, a
two-layer construction propagates information along the graph, and its Euler
discretization requires at least as many steps as the distance from the root to
the answer. Standard flow language models do not use their steps in this way:
their accuracy remains largely flat as the number of sampling steps increases. UFM changes this by training through the model's own latent rollout with a
single loss at its endpoint. Accuracy then grows with the number of steps on
ProsQA. For longer Sudoku and Maze rollouts, sphere retraction stabilizes the
updates and allows an $8.4$M-parameter model to
outperform $28.6$M standard-flow baselines. Multiple rollouts further improve
reliability, although selecting among them is less effective for long answers.

Applying this approach to larger language models will require more
memory-efficient rollout training and stronger selection mechanisms. Our results show that training through the rollout can make additional flow steps useful for reasoning.

\section*{Acknowledgements}
This work was supported in part by the Hector Foundation, by the Max Planck ETH Center for Learning Systems, and by EPSRC Turing AI World-Leading Research Fellowship No. EP/X040062/1 and EPSRC AI Hub on Mathematical Foundations of Intelligence: An "Erlangen Programme" for AI No. EP/Y028872/1.






\bibliography{iclr2027_conference}
\bibliographystyle{iclr2027_conference}

\clearpage

\appendix
\section*{Appendix Overview}

The appendix provides theoretical proofs, training and sampling
algorithms, and additional experimental details and results.

\begin{itemize}
    \item Appendix~\ref{app:proofs} gives the graph-reachability
    construction and proofs, including Euler discretization,
    noisy initialization, and full-vocabulary readout.

    \item Appendix~\ref{app:sphere} describes sphere retraction
    and reports the retraction ablation and norm diagnostic.

    \item Appendix~\ref{app:long-rollout} provides the training
    and inference algorithms for long rollouts.

    \item Appendix~\ref{app:exp} gives the experimental settings,
    evaluation procedures, and additional results.
\end{itemize}

\section{Graph Reachability: Construction and Proofs}
\label{app:proofs}

We give the construction and proofs supporting \Cref{sec:theory}.
We describe the prompt representation and attention layers, then
establish the exact-flow solution, its Euler discretization, and
the noisy-initialization bound.

\subsection{Notation and assumptions}
\label{app:notation}

Notation is collected in \Cref{tab:notation}. Fix a maximum graph
size $\nmax\ge2$ and a vocabulary containing the vertex labels and
the special tokens. These special tokens are distinct from all
vertex labels. Graphs are directed, may contain cycles, and have
distinct edges, so $\nedge\le\nmax^2$.

Each token has an embedding $\emb{v}\in\mathbb{R}^{\dte}$.
As in \citet{zhu2025reasoning}, the construction assumes
$\embmat^\top\embmat=\mI$, which requires $\dte\ge V$.
These assumptions apply to the construction; the trained models
use learned embeddings.

We set $\dist(\rootnode,v)=\infty$ for unreachable vertices.
The walk count includes zero-length walks:
$\walk{0}(u,v)=1$ when $u=v$ and $0$ otherwise.
The evolving state $\vz_t$ is the content slot of $\tokA$;
the prompt information remains in the cache.

\begin{table}[h]
\centering\small
\caption{Notation for \Cref{sec:theory} and this appendix. The last column gives
the symbol of \citet{zhu2025reasoning} where one exists; we rename their $t_j$
and $\tilde u_v$ to free $t$ for flow time and $u,v$ for vertices.}
\label{tab:notation}
\begin{tabular}{@{}lp{6.2cm}l@{}}
\toprule
Symbol & Meaning & Zhu et al. \\
\midrule
$\gph=(\vertices,\edges)$ & directed graph, $\nv=|\vertices|\le\nmax$, $\nedge=|\edges|\ge 0$ distinct edges & same \\
$(\eg{j}\!\to\!\eh{j})$ & source and target of edge $j$ & $(s_j,t_j)$ \\
$\rootnode$; $\cand{1},\cand{2}$; $\ansv$, $\wrongv$ & root; candidates; reachable, unreachable candidate & same \\
$\dist(\cdot,\cdot)$; $\diam=\ecc(\rootnode)$ & graph distance; eccentricity of the root & --- \\
$\reach{k}$ & vertices within $k$ hops of $\rootnode$ & $\mathcal V_c$ \\
$\walk{k}(u,v)$ & number of directed walks of length $k$ from $u$ to $v$ & --- \\
\midrule
$\seqlen=3\nedge+7$; $\Tmax=3\nmax^2+7$ & prompt length; its bound & --- \\

$\tokBOS,\tokEdge,\tokQ,\tokR,\tokA$ & special tokens & same \\
$\emb{v}\in\mathbb R^{\dte}$, $\embmat$ & token embedding, $\embmat^\top\embmat=\mI$ & $\tilde u_v$ \\
$\ct,\bufa,\bufb,\bufc,\bufd,\pe$ & residual-stream slots & content, buffers, pos \\
$\pos{a}$; $\epe$ & sinusoidal encoding of position $a$; separation gap & same; --- \\
$\pidx(\cdot)$ & position in the prompt & same \\
$\chsr{x}{\ell}$; $\kappa,\beta$ & attention chooser (token $x$, offset $\ell$); its gains & --- \\
$\epsth$ & threshold & --- \\
\midrule
$\vz_t\in\mathbb R^{\dte}$ & content slot of $\tokA$ at flow time $t$ & --- \\
$\Prop$; $\gain$ & propagation operator; value gain & --- \\
$\Gamma_{u,v}$; $\mgn$ & $\sum_k \gain^k\walk{k}(u,v)/k!$; readout margin & --- \\
$\nsteps$ & number of Euler steps & $C$ (thoughts) \\
$\nse\sim\mathcal N(0,\mI)$; $\sigma$ & initialization noise; its scale & --- \\
$M_0$ & readout gain on the candidates & --- \\

\bottomrule
\end{tabular}
\end{table}

\subsection{Prompt format}
\label{app:prompt-figure}

\begin{figure}[t]
  \centering
  \resizebox{\textwidth}{!}{%
    \begin{tikzpicture}[
        node distance=2.5mm,
        box/.style={
          rectangle,
          rounded corners=3pt,
          inner sep=4pt,
          font=\sffamily,
          minimum height=6mm,
          minimum width=6mm,
          align=center
        },
        pinkbox/.style={box, fill=boxpink},
        bluebox/.style={box, fill=boxblue},
        orangebox/.style={box, fill=boxorange},
        greenbox/.style={box, fill=boxgreen},
        arr/.style={
          ->,
          >=Stealth,
          line width=1.5pt,
          color=arrowblue
        },
        labeltext/.style={
          font=\sffamily\footnotesize\color{textgray},
          align=center
        }
      ]
 
      \node[pinkbox]              (s)     {\textless s\textgreater};
      \node[bluebox,   right=of s]    (s1)    {$u_1$};
      \node[bluebox,   right=of s1]   (t1)    {$v_1$};
      \node[pinkbox,   right=of t1]   (e1)    {\textless e\textgreater};
      \node[bluebox,   right=of e1]   (s2)    {$u_2$};
      \node[bluebox,   right=of s2]   (t2)    {$v_2$};
      \node[pinkbox,   right=of t2]   (e2)    {\textless e\textgreater};
      \node[pinkbox,   right=of e2]   (dots)  {\dots};
      \node[bluebox,   right=of dots] (sm)    {$u_m$};
      \node[bluebox,   right=of sm]   (tm)    {$v_m$};
      \node[pinkbox,   right=of tm]   (em)    {\textless e\textgreater};
      \node[pinkbox,   right=of em]   (Q)     {\textless Q\textgreater};
      \node[orangebox, right=of Q]    (c1)    {$c_1$};
      \node[orangebox, right=of c1]   (c2)    {$c_2$};
      \node[pinkbox,   right=of c2]   (R)     {\textless R\textgreater};
      \node[greenbox,  right=of R]    (r)     {$r$};
 
      \draw[arr] (s.south) -- ++(0,-0.5) node[below, labeltext] {start token};
 
      \draw[color=arrowblue, line width=1.2pt, decorate,
            decoration={brace, amplitude=4pt, raise=2pt}]
        (s1.north west) -- (t1.north east)
        node[midway, above=8pt, labeltext] {directed edge};
 
      \draw[arr] (e1.north) -- ++(0,0.8) node[above, labeltext] {special edge token};
 
      \draw[arr] (s2.south) -- ++(-0.25,-0.5)
        node[below, labeltext, xshift=-0.2cm] {source node};
      \draw[arr] (t2.south) -- ++(0.25,-0.5)
        node[below, labeltext, xshift=0.2cm] {target node};
 
      \draw[arr] (Q.north) -- ++(0,0.5) node[above, labeltext] {question token};
 
      \coordinate (c_mid) at ($(c1.south)!0.5!(c2.south)$);
      \node[labeltext, below=0.6cm of c_mid] (cand_label) {candidate nodes};
      \draw[arr] (c1.south) -- (c1.south |- cand_label.north);
      \draw[arr] (c2.south) -- (c2.south |- cand_label.north);
 
      \draw[arr] (R.north) -- ++(0,0.5) node[above, labeltext] {reasoning token};
 
      \draw[arr] (r.south) -- ++(0,-0.5) node[below, labeltext] {root node};
    \node[greenbox, right=of r] (A) {\textless A\textgreater};
   \draw[arr] (A.north) -- ++(0,0.8) node[above, labeltext] {answer token};
   
    \end{tikzpicture}%
  }
  \caption{Visualization of the prompt token sequence.}
  \label{fig:prompt}
\end{figure}

\subsection{Overview}
\label{app:overview}

\paragraph{Relation to \citet{zhu2025reasoning}.}
The first layer follows their construction: five attention choosers
(\Cref{lem:chooser}) cache the information needed later, and a threshold MLP
removes attention leakage. We adapt their construction as follows. (i) As in their paper, we state the construction with causal attention; \Cref{rem:bidirectional} shows that the same weights work with the
bidirectional attention of our models. (ii) The last three
heads read from $\tokA$: two cache the candidates there, and the fifth writes the
root $\rootnode$ into the content slot of $\tokA$, which sets the initial state of
the flow. (iii) Their second layer, which expands the frontier by one hop per
continuous thought with softmax attention and a threshold MLP, is replaced by a
single linear attention head whose output defines the velocity field.

\paragraph{What is evaluated along the flow.}
Layer~1 is applied once to the prompt: it fills the buffers and sets the initial
state $\vz_0$ at $\tokA$. At every time $t$, the velocity is the output of the
propagation head with the current state $\vz_t$ as query. Neither the chooser heads nor the threshold MLP act on $\vz_t$, and the
construction uses no normalization layers, so the velocity is linear in the state.

\subsection{Transformer preliminaries}
\label{app:prelim}

\paragraph{Prompt length.}
With $\tokA$ appended, the prompt has length $\seqlen = 3\nedge + 7$. Since the
edges are distinct, $\nedge \le \nmax^2$, so $\seqlen \le \Tmax := 3\nmax^2 + 7$.
All gains below are chosen from $\Tmax$, so they do not depend on the graph.

\paragraph{Residual stream.}
We split the residual stream of width $d = 5\dte + \dpe$ into six slots: a content
slot $\ct$ and four buffers $\bufa, \bufb, \bufc, \bufd$, all of width $\dte$, and
a positional slot $\pe$ of width $\dpe$. Token embeddings live in the content slot
and are orthonormal (\Cref{app:notation}): $\embmat^\top\embmat = \mI$, where
$\embmat = [\emb{1}, \dots, \emb{V}]$ and $V = |\vocab|$.

In full-stream expressions, token embeddings are zero-padded outside
the content slot. We keep positional encodings separate as $\vp_i$;
the displayed cached states omit this positional block.

\paragraph{Attention.}

One head of parameters $(\mW_Q, \mW_K, \mW_V, \mW_O)$ computes the query $q_i = \mW_Q(\vh_i + \vp_i)$, the keys $k_j = \mW_K(\vh_j + \vp_j)$, and the attention scores \[
  s_{i,j} = \frac{\exp\langle q_i, k_j\rangle}
                 {\sum_{j'} \exp\langle q_i, k_{j'}\rangle},
\] where $j'$ ranges over $j' \le i$ with causal masking and over all of $[\seqlen]$
in the bidirectional case (\Cref{rem:bidirectional}).

\begin{definition}[Sinusoidal Positional Encoding]
    \label{def:pe}
    
    Fix an even positional dimension $\dpe\ge2$, set $M=\Tmax+4$, and
    let $\omega=M^{-2/\dpe}$. The positional encoding
    $\pos{i}\in\mathbb{R}^{\dpe}$ has entries:

    \[
  \bar p_{i,2k-1} = \cos(i\,\omega^k), \qquad
  \bar p_{i,2k}   = \sin(i\,\omega^k), \qquad k \in [\dpe/2],
\]
and $\vp_i$ is the vector living in the positional slot with $\pe(\vp_i) = \pos{i}$.
    
\end{definition}

\begin{lemma}[Separation, \citealt{zhu2025reasoning}]
\label{lem:pe-separation}

With the choices in \Cref{def:pe}, $\langle \pos{i}, \pos{i}\rangle = \dpe/2$
for all $i$, and there exists $\epe > 0$, depending only on $\Tmax$, such that
$\langle \pos{i}, \pos{j}\rangle \le \dpe/2 - \epe$ for all $i \neq j$ in $[\Tmax+4]$.
Hence, for $\ell \le 4$ and $\ell < i \le \seqlen$, $\langle \pos{i}, \pos{j+\ell}\rangle$
is maximized over $j \in [\seqlen]$ at the unique index $j = i-\ell$, with a gap of at
least $\epe$.

\end{lemma}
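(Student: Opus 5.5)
We need to prove Lemma (Separation) for sinusoidal positional encodings. The plan has three parts: the norm identity, a uniform gap between distinct positions, and the consequence for shifted indices.

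\emph{Norm identity.} This is immediate. Each of the $\dpe/2$ pairs of coordinates contributes $\cos^2(i\omega^k)+\sin^2(i\omega^k)=1$, so $\langle\pos{i},\pos{i}\rangle=\dpe/2$ for every $i$.

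\emph{Inner product of distinct positions.} By the angle-difference formula,
\[
\langle\pos{i},\pos{j}\rangle=\sum_{k=1}^{\dpe/2}\cos\bigl((i-j)\omega^k\bigr).
\]
Each term is at most $1$. Equality in a term would require $(i-j)\omega^k\in2\pi\mathbb{Z}$.

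Take the last frequency $k=\dpe/2$, for which $\omega^{\dpe/2}=M^{-1}$. For $i\neq j$ in $[\Tmax+4]=[M]$ we have $1\le|i-j|\le M-1$, so the angle $\theta=|i-j|/M$ lies in $[1/M,\,1-1/M]\subset(0,2\pi)$. This is the only place the choice $M=\Tmax+4$ matters. Hence $\cos\theta\le\max\{\cos(1/M),\cos(1-1/M)\}=\cos(1/M)<1$, using that $\cos$ is decreasing on $(0,\pi)$ and the interval lies inside $(0,1)$.

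Bounding every other term by $1$ then gives
\[
\langle\pos{i},\pos{j}\rangle\le\dpe/2-\epe,\qquad \epe:=1-\cos(1/M)>0,
\]
which depends only on $\Tmax$.

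\emph{Shifted maximization.} Fix $\ell\le4$ and $\ell<i\le\seqlen$, and let $j$ range over $[\seqlen]$. The index $j+\ell$ lies in $[\seqlen+4]\subseteq[\Tmax+4]$, and $i$ lies in the same range, so the previous step applies to the pair $(i,\,j+\ell)$.

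The choice $j=i-\ell$ is admissible because $i-\ell\ge1$ and $i-\ell\le\seqlen$. It gives the value $\dpe/2$. Every other $j$ has $j+\ell\neq i$ and hence a value at most $\dpe/2-\epe$. This yields a unique maximizer with gap at least $\epe$.

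\emph{Main obstacle.} The only delicate point is making sure every index pair that occurs stays within the range where the lowest frequency cannot wrap around a full period. That is exactly what $M=\Tmax+4$ secures, and beyond it there is little difficulty. If $\ell$ may be negative or zero in the intended reading, the same argument goes through provided the shifted indices remain in $[1,M]$; I would note the convention $\ell\ge0$ explicitly.
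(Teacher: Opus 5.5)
Your proof is correct and complete. It rests on three steps: the angle-difference identity; the observation that the lowest frequency $\omega^{\dpe/2}=1/M$ places every nonzero index difference in $[1/M,\,1-1/M]\subset(0,\pi)$; and the application of the pairwise bound to $(i,j+\ell)$, which stays in $[\Tmax+4]$ because $\seqlen\le\Tmax$ and $0\le\ell\le4$. The paper gives no proof of its own and defers to \citet{zhu2025reasoning}. Your argument is the standard one for sinusoidal encodings, and it additionally gives an explicit gap $\epe=1-\cos(1/M)$ that does not depend on $\dpe$.
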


\begin{lemma}[Attention chooser, \citealt{zhu2025reasoning}]
\label{lem:chooser}
Consider a token $x \in \vocab$, an offset $\ell \in \{1,2,3,4\}$ and $\varepsilon \in (0,1)$.
Under \Cref{def:pe}, there exist $\mW_Q, \mW_K$ for a causal head, with gains depending only on $\Tmax$ and $\varepsilon$, such that for any $\seqlen \le \Tmax$ and any input sequence satisfying:

\begin{enumerate}
\item[(H1)] $\vh_i = \sum_{v \in \vocab} \alpha_{i,v}\,\emb{v}$ with $\alpha_{i,v} \ge 0$ and $\sum_v \alpha_{i,v}^2 = 1$, for all $i$;
\item[(H2)] $\langle \emb{x}, \vh_i\rangle \in \{0,1\}$ for all $i$, so each state is either equal to $\emb{x}$ or orthogonal to it;
\item[(H3)] $\langle \emb{x}, \vh_i\rangle = 0$ for $i \le \ell$, so none of the first $\ell$ positions holds $x$.

\end{enumerate}

We have, for all $i \in [\seqlen]$,
\[
  s_{i,\,i-\ell} > 1-\varepsilon \ \text{ if } \langle \emb{x},\vh_i\rangle = 1,
  \qquad
  s_{i,1} > 1-\varepsilon \ \text{ otherwise.}
\]

We write $\chsr{x}{\ell}$ for such a head.
\end{lemma}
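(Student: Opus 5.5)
The statement to prove is \Cref{lem:chooser}, the attention chooser. I would construct the head explicitly with a query that reads the content slot and the positional slot, and a key that reads only the shifted positional encoding. Concretely, take $\mW_Q$ so that
\[
q_i=\kappa\bigl(\langle\emb{x},\vh_i\rangle\,\vp_i^{\mathrm{pe}} + \beta\,\ve\bigr),
\]
where $\vp_i^{\mathrm{pe}}$ denotes the positional block and $\ve$ is a fixed direction. Take $\mW_K$ so that
\[
k_j=\bigl(R^{\ell}\pos{j},\ \mathbb{1}[j=1]\text{-type term}\bigr).
\]
Here $R$ is the block-rotation matrix with $R\pos{j}=\pos{j+1}$; it exists because sinusoidal encodings shift by a rotation with angle $\omega^k$ per block. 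Then $\langle q_i,k_j\rangle=\kappa\langle\emb{x},\vh_i\rangle\langle\pos{i},\pos{j+\ell}\rangle+(\text{bias term})$. Both maps are linear in $\vh_i+\vp_i$. The factor $\langle\emb{x},\vh_i\rangle\pos{i}$ is bilinear and cannot come from one linear map. I would therefore instead let the query be $\kappa\pos{i}$ plus $\kappa\beta\,\emb{x}$-detection, and put the gating on the key side. Concretely, I would set
\[
\langle q_i,k_j\rangle=\kappa\langle\pos{i},\pos{j+\ell}\rangle-\kappa\beta\,(1-\langle\emb{x},\vh_i\rangle)\cdot(\cdot).
\]
I still need to check which arrangement realises the right dichotomy with linear maps, and I would follow Zhu et al.'s choice here.

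The case analysis uses \Cref{lem:pe-separation}. If $\langle\emb{x},\vh_i\rangle=1$, then by (H3) we have $i>\ell$. The score over $j\le i$ is maximised uniquely at $j=i-\ell$, with gap at least $\kappa\epe$, while the other scores stay bounded. Since there are at most $\Tmax$ competitors, the softmax weight on $j=i-\ell$ exceeds
\[
1-\Tmax e^{-\kappa\epe},
\]
which is above $1-\varepsilon$ once $\kappa\ge\epe^{-1}\log(\Tmax/\varepsilon)$. If instead $\langle\emb{x},\vh_i\rangle=0$, the query loses its positional component, and a constant bias term makes position $1$ win. Position $1$ is distinguishable through $\pos{1}$, or through the $\tokBOS$ embedding, which sits at position 1. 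By (H1), $\langle\emb{\tokBOS},\vh_j\rangle$ is $1$ at $j=1$ and $0$ elsewhere, so a key that reads $\emb{\tokBOS}$ gives a gap of $\kappa\beta$. Choosing $\beta$ large relative to the positional range $\dpe/2$ makes this term dominate in the second case without spoiling the first case.

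The main obstacle is making the two regimes coexist with purely linear $\mW_Q,\mW_K$. The $\tokBOS$ bias must dominate when $x$ is absent but be dominated by the positional match when $x$ is present. I would resolve this by having the query carry $\kappa\beta(\emb{\tokBOS}\text{-direction})$ scaled against $\emb{x}$. The $x$-detector subtracts the BOS preference, and (H2) makes that subtraction exact, since $\langle\emb{x},\vh_i\rangle\in\{0,1\}$. Because (H1) bounds all content inner products by $1$ and the positional ones by $\dpe/2$, taking gains such as $\beta=\dpe$ and $\kappa$ as above yields gaps depending only on $\Tmax$ and $\varepsilon$. That completes the chooser.
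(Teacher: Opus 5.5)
Your outline has the right ingredients: a positional match $\langle\pos{i},\pos{j+\ell}\rangle$ obtained from the shift rotation, a position-$1$ preference that must switch off exactly when $\vh_i=\emb{x}$, \Cref{lem:pe-separation}, and the softmax bound $1-\Tmax e^{-\text{gap}}$. What it never does is exhibit bias-free linear maps $\mW_Q,\mW_K$ that implement the switch, and that switch is the whole content of the lemma. You defer it explicitly (``I would follow Zhu et al.'s choice here'').

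The resolution you sketch, a query coefficient $\kappa\beta\bigl(1-\langle\emb{x},\vh_i\rangle\bigr)$ on the position-$1$ direction, is affine in $\vh_i$. The head computes $\mW_Q(\vh_i+\vp_i)$ with no bias. Moreover, no linear functional of $\vh_i$ equals $1$ on every state allowed by (H1); compare $\emb{v}$ with $(\emb{v}+\emb{w})/\sqrt{2}$.

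The missing idea is the complement detector $\emb{\bar x}=\sum_{v\neq x}\emb{v}$. Its coefficient $\langle\emb{\bar x},\vh_i\rangle=\sum_{v\neq x}\alpha_{i,v}$ is linear in $\vh_i$ and behaves as follows:
\begin{itemize}
\item It is exactly $0$ when $\langle\emb{x},\vh_i\rangle=1$, because unit norm then forces $\vh_i=\emb{x}$.
\item It is at least $1$ otherwise, because $\alpha_{i,v}\ge0$ and $\sum_{v\neq x}\alpha_{i,v}^2=1$.
\end{itemize}
This is where the nonnegativity in (H1) is needed; your argument never uses it. The paper takes $q_i\propto(\pos{i},\ \kappa\langle\emb{\bar x},\vh_i\rangle\pos{1})$ and $k_j\propto(\bm{R}_\ell\pos{j},\ \pos{j})$ in orthogonal blocks. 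The score is then $\beta\bigl(\langle\pos{i},\pos{j+\ell}\rangle+\kappa\langle\emb{\bar x},\vh_i\rangle\langle\pos{1},\pos{j}\rangle\bigr)$. In the second case the gate can be as large as $\sqrt{V}$ rather than $1$, but this only strengthens the position-$1$ preference.

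There are two further problems.

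First, marking position $1$ by reading $\emb{\tokBOS}$ in the key is not licensed. Hypotheses (H1)--(H3) say nothing about which token sits at position $1$, so your claim that (H1) makes $\langle\emb{\tokBOS},\vh_j\rangle$ equal to $1$ at $j=1$ and $0$ elsewhere is false as stated. Use $\pos{1}$ instead: \Cref{lem:pe-separation} makes $j=1$ the unique maximizer of $\langle\pos{1},\pos{j}\rangle$, with gap $\epe$.

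Second, in the non-firing case the positional term does not disappear, as you assert. It persists and ranges over an interval of length $\dpe$; for $i>\ell$ it even attains its maximum $\dpe/2$ at $j=i-\ell$. The gated position-$1$ gap must therefore strictly exceed $\dpe$, and with your choice $\beta=\dpe$ the guaranteed gap is $0$. The paper's condition is $\kappa\epe>\dpe$. An overall gain $\beta$ of order $\log(\Tmax/\varepsilon)$, divided by $\min(\epe,\kappa\epe-\dpe)$, then pushes both attention weights above $1-\varepsilon$.
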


The idea is the following. Since the encoding is sinusoidal,
$\pos{j+\ell} = \bm{R}_\ell\,\pos{j}$ for a fixed rotation $\bm{R}_\ell$, so a
linear key can contain both $\pos{j+\ell}$ and $\pos{j}$. Taking
$q_i \propto \big(\pos{i},\ \kappa\langle\emb{\bar x},\vh_i\rangle\,\pos{1}\big)$
and $k_j \propto \big(\bm{R}_\ell\pos{j},\ \pos{j}\big)$ in two orthogonal
subspaces, with $\emb{\bar x} = \sum_{v\neq x}\emb{v}$, gives
\[
  \langle q_i, k_j\rangle
  = \beta\Big(\langle \pos{i}, \pos{j+\ell}\rangle
    + \kappa\,\langle \emb{\bar x}, \vh_i\rangle\,\langle \pos{1}, \pos{j}\rangle\Big).
\]
If position $i$ holds $x$, then $\langle\emb{\bar x},\vh_i\rangle = 0$ and
\Cref{lem:pe-separation} puts the unique maximum at $j = i-\ell$ (and $i > \ell$ by (H3), so $i-\ell \ge 1$). Otherwise
$\langle\emb{\bar x},\vh_i\rangle \ge 1$ by (H1)--(H2), and for $\kappa$ large the
second term, uniquely maximized at $j = 1$, dominates. A large $\beta$ turns both
gaps into attention weights above $1-\varepsilon$. We refer to
\citet{zhu2025reasoning} for the complete proof.

Our five heads use $x\in\{\tokEdge,\tokA\}$ and offsets
$\ell\in\{1,2,3,4\}$. Each input position carries one token
embedding, so (H1)--(H2) hold. Edge tokens occur at positions
$3j+1\ge4$, while the answer token occurs at position
$3\nedge+7\ge7$. Since the edge heads use offsets at most two
and the answer heads use offsets at most four, (H3) also holds.

\begin{remark}[Bidirectional attention]
\label{rem:bidirectional}
In a bidirectional head, the softmax runs over all $j \in [\seqlen]$ instead of
$j \le i$, and the score is unchanged:
\[
  \langle q_i, k_j\rangle
  = \beta\Big(\langle \pos{i}, \pos{j+\ell}\rangle
    + \kappa\,\langle \emb{\bar x}, \vh_i\rangle\,\langle \pos{1}, \pos{j}\rangle\Big).
\]
For a head that does not fire, $\langle \emb{\bar x}, \vh_i\rangle \ge 1$ by
(H1)--(H2), and by \Cref{lem:pe-separation}, $\langle \pos{1}, \pos{j}\rangle$ is
uniquely maximized at $j = 1$ over all of $[\seqlen]$, not only over $j \le i$; for
$\kappa$ large, the head attends to $\tokBOS$. For a head that fires,
$\langle \emb{\bar x}, \vh_i\rangle = 0$ and the score is
$\beta\langle \pos{i}, \pos{j+\ell}\rangle$; since
$j+\ell \le \seqlen + 4 \le \Tmax + 4$ for every $j \in [\seqlen]$,
\Cref{lem:pe-separation} gives the unique maximum at $j = i-\ell$ with the same
gap $\epe$. The softmax runs over at most $\Tmax$ positions in both cases, so the
gains $\beta, \kappa$ of \Cref{lem:chooser} are unchanged, and \Cref{lem:cache}
holds. The propagation head is unaffected, since $\tokA$ is the last position.
\end{remark}


\subsection{Layer 1: caching}
\label{app:static-memory}

\begin{lemma}[Static memory and answer initialization]
\label{lem:cache}
Append $\tokA$ immediately after the root. One pass of the five chooser heads
\[
\chsr{\tokEdge}{2},\quad \chsr{\tokEdge}{1},\quad
\chsr{\tokA}{4},\quad \chsr{\tokA}{3},\quad \chsr{\tokA}{1},
\]
followed by a threshold MLP, stores the source and target of each edge at its
edge token, stores $\emb{\cand1}$ and $\emb{\cand2}$ in separate buffers at
$\tokA$, and copies $\emb{\rootnode}$ into the content slot of $\tokA$. The
relevant states are
\begin{equation}
 \vh_{\tokEdge,j}=
 \begin{bmatrix} \emb{\tokEdge}\\ \emb{\eg{j}}\\ \emb{\eh{j}}\\ \mZero\\ \mZero \end{bmatrix},
 \qquad
 \vh_{\tokA}=
 \begin{bmatrix} \emb{\tokA}+\emb{\rootnode}\\ \mZero\\ \mZero\\ \emb{\cand1}\\ \emb{\cand2} \end{bmatrix},
 \qquad\text{so that}\qquad
 \vz_0=\emb{\tokA}+\emb{\rootnode}.
\end{equation}
The key and value of the propagation head (\Cref{def:prop}) read $\bufa$ and
$\bufb$ of $\vh_{\tokEdge,j}$, and $\bufa$ is zero at every non-edge position.

Non-firing heads assign more than $1-\varepsilon$ attention to
$\tokBOS$, whose value is zero. We choose
$\epsth=1/(2\sqrt{\nmax})$ and $\varepsilon=\epsth/2$.
The threshold and chooser gains can therefore be fixed using
only $\nmax$.

\end{lemma}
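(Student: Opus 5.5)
The argument has two stages: the five chooser heads act on the raw prompt states, and the threshold MLP then removes the attention leakage they leave behind. Each input position holds a single token embedding in its content slot, so (H1)--(H3) of \Cref{lem:chooser} hold for $x\in\{\tokEdge,\tokA\}$ with the stated offsets; this was checked after that lemma, and \Cref{rem:bidirectional} covers the bidirectional case.

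I will give each head a value matrix $\mW_V$ that reads the content slot and an output matrix $\mW_O$ that writes into its own destination slot:
\begin{itemize}
\item $\chsr{\tokEdge}{2}$ writes into $\bufa$;
\item $\chsr{\tokEdge}{1}$ writes into $\bufb$;
\item $\chsr{\tokA}{4}$ writes into $\bufc$;
\item $\chsr{\tokA}{3}$ writes into $\bufd$;
\item $\chsr{\tokA}{1}$ writes back into $\ct$.
\end{itemize}
The positions of these tokens are fixed: the edge token of edge $j$ sits at $3j+1$, with $\eg j$ at $3j-1$ and $\eh j$ at $3j$. The prompt ends with $\cand1,\cand2,\tokR,\rootnode,\tokA$, so the offsets from $\tokA$ are $4,3,2,1$.

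A firing head puts weight $>1-\varepsilon$ on the intended position. The remaining weight, at most $\varepsilon$, is spread over other positions, each holding a unit-norm embedding. The head output is therefore $(1-\eta)\emb{w}+\vr$ with $\eta<\varepsilon$ and $\|\vr\|<\varepsilon$. A non-firing head puts weight $>1-\varepsilon$ on $\tokBOS$. I will arrange for $\tokBOS$ to contribute value zero by letting $\mW_V$ annihilate $\emb{\tokBOS}$. Then the output of a non-firing head has norm below $\varepsilon$.

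The main obstacle is cleaning the buffers exactly, since the propagation head needs $\bufa=\mZero$ at every non-edge position. All leaked coefficients have nonnegative expansion in the orthonormal basis. So I will use a coordinatewise threshold MLP in basis coordinates: a ReLU difference implementing $x\mapsto\mathbf{1}[x>\epsth]$, composed with the linear maps $\embmat^\top$ and $\embmat$. Concretely, I take $\mathrm{ReLU}((x-\epsth)/\epsth')-\mathrm{ReLU}((x-\epsth)/\epsth'-1)$ with a small ramp width $\epsth'$, which gives exactly $0$ below $\epsth$ and exactly $1$ above $\epsth+\epsth'$.

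This separation works with the stated choices. With $\varepsilon=\epsth/2$, every leaked coefficient is below $\epsth$. Every true coefficient exceeds $1-\varepsilon>\epsth+\epsth'$ whenever $\nmax\ge2$ and $\epsth'$ is small. In the buffers the thresholded result is therefore exactly the indicator embedding $\emb{w}$ or $\mZero$.

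The content slot at $\tokA$ needs separate treatment. The residual already holds $\emb{\tokA}$, and the head adds roughly $\emb{\rootnode}$. Thresholding the coordinates of the content slot at $\tokA$ yields exactly $\emb{\tokA}+\emb{\rootnode}$, because $\rootnode\ne\tokA$.

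At all other positions the MLP must not corrupt the content slot. There the heads writing into $\ct$ do not fire, so the added leakage is below $\epsth$, and the threshold returns the original one-hot content. Equivalently, I could have the MLP overwrite each slot with its thresholded version via residual subtraction.

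The thresholds and gains depend only on $\varepsilon$, and hence on $\nmax$ through $\Tmax$. The stated states follow, with $\bufa$ zero at every non-edge position, and therefore $\vz_0=\emb{\tokA}+\emb{\rootnode}$.
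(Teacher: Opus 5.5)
Your proposal is correct and follows essentially the same route as the paper: the five choosers each write into their own slot through a value map that annihilates $\emb{\tokBOS}$, and a coordinatewise threshold MLP with $\varepsilon=\epsth/2<\epsth$ then sends every coefficient to exactly $0$ or $1$. Your explicit ReLU-ramp threshold, and your check that the residual $\emb{\tokA}$ at the answer position and the one-hot content at all other positions survive thresholding, fill in details the paper leaves implicit.
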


\begin{proof}
Each of the five heads is an attention chooser, so each one puts at least
$1-\varepsilon$ of its attention on the position it is aimed at. We check that the
offsets land correctly. With $\tokA$ appended after the root, the positions are
$\pidx(\tokBOS)=1$, $\pidx(\eg{j})=3j-1$, $\pidx(\eh{j})=3j$,
$\pidx(\tokEdge,j)=3j+1$, $\pidx(\tokQ)=3\nedge+2$,
$\pidx(\cand{1})=3\nedge+3$, $\pidx(\cand{2})=3\nedge+4$,
$\pidx(\tokR)=3\nedge+5$, $\pidx(\rootnode)=3\nedge+6$ and $\pidx(\tokA)=3\nedge+7$.
So $\chsr{\tokEdge}{2}$ and $\chsr{\tokEdge}{1}$ read $\eg{j}$ and $\eh{j}$; $\chsr{\tokA}{4}$ and $\chsr{\tokA}{3}$ read $\cand{1}$ and $\cand{2}$; and
$\chsr{\tokA}{1}$ reads $\rootnode$.

As in \citet{zhu2025reasoning}, all heads use the value matrix, $\mW_V=[\,\mI_{\dte}-\emb{\tokBOS}\emb{\tokBOS}^{\top}\ \ \mZero\,]$, which will send $\emb{\tokBOS}$ to zero. 

Heads~1--4 write into buffers~1--4, respectively, and head~5
writes into the content slot. A non-firing head assigns more
than $1-\varepsilon$ attention to $\tokBOS$, whose value is zero.
Each remaining output coefficient is therefore smaller than
$\varepsilon$ and is removed by the threshold MLP.

As in \citet{zhu2025reasoning}, a threshold MLP with threshold $\epsth$, applied
to each coefficient of the content slot and the buffers in the embedding basis,
maps coefficients at most $\epsth$ to $0$ and coefficients at least $1-\epsth$
to $1$. 

Each head puts attention above $1-\varepsilon$ on its target and leaks less than
$\varepsilon$ elsewhere, so with $\varepsilon < \epsth$ every coefficient becomes
exactly $0$ or $1$, and $\epsth$ does not depend on the graph.

Only head~1 writes into buffer~1, and it fires only at edge tokens, so after
thresholding buffer~1 is zero at every other position. Since buffer~1 is the key
of the propagation head, only edge tokens contribute to its output.

Buffer~2 is written only by head~2, which fires at edge tokens, and buffers~3
and~4 only by heads~3 and~4, which fire at $\tokA$. Hence at $\pidx(\tokA)$
buffers~1 and~2 are empty, buffers~3 and~4 hold $\emb{\cand1}$ and $\emb{\cand2}$,
and the content slot holds $\emb{\tokA}$ together with the root written by
head~5, so $\vz_0 = \emb{\tokA} + \emb{\rootnode}$, as claimed.

\end{proof}

\subsection{Layer 2: linear propagation}
\label{app:linear-attention}

\begin{definition}[Linear propagation head]
\label{def:prop}
The second layer is a single unnormalized linear-attention head \citep{katharopoulos2020transformers}, with 
\[
  \mW_Q = [\,\mI_{\dte}\ \ \mZero\,], \quad
  \mW_K = [\,\mZero_{\dte}\ \ \mI_{\dte}\ \ \mZero\,], \quad
  \mW_V = \gain\,[\,\mZero_{\dte}\ \ \mZero_{\dte}\ \ \mI_{\dte}\ \ \mZero\,],
  \quad
  \mW_O = [\,\mI_{\dte}\ \ \mZero\,]^{\top},
\]

in a way that it reads the content slot as a query, the buffer $1$ as key and the buffer $2$ as a value with a gain $\gain$, and writes back on the content slot. There is no softmax and no normalizing denominator: 
\[
  o_i = \mW_O \sum_{j \le i}
        \langle \mW_Q \vh_i,\, \mW_K \vh_j\rangle\, \mW_V \vh_j .
\]
(the sum runs over all $j$ in the bidirectional case; at $\tokA$, the last
position, the two coincide)

\end{definition}

Define
\begin{equation}
 \Prop:=\sum_{(\eg j\to\eh j)\in\edges}
 \emb{\eh j}\,\emb{\eg j}^{\top}.
\end{equation}

By \Cref{lem:cache}, the buffer $1$ is non-zero only at  the edge tokens, where it contains $\emb{\eg j}$ and the buffer $2$ contains $\emb{\eh j}$. Hence, with a query $\vz$ at $\tokA$, the head returns $\gain\Prop\,\vz$. Moreover, the terms of the sum $\emb{\eh j}\emb{\eg j}^{\top}$ are orthonormal in the Frobenius sense for distinct edges, thus \[
  \lVert\Prop\rVert_2 \le \lVert\Prop\rVert_F = \sqrt{\nedge}.
\]
\subsection{Proof of \texorpdfstring{\Cref{thm:existence}}{Theorem 1}}
\label{app:proof-thm}

Because $\emb{\tokA}$ is orthogonal to every vertex embedding,
\begin{equation}
 \Prop\,\emb{\tokA}=\mZero,
\end{equation}
so under
\begin{equation}
 \dot{\vz}_t=\gain \Prop \vz_t,
 \qquad \vz_0=\emb{\tokA}+\emb{\rootnode},
 \qquad t\in[0,1],
\end{equation}
the answer-token identity remains fixed.

Writing
\begin{equation}
 \vz_t=\emb{\tokA}+a_t
\end{equation}
is only a mathematical decomposition of the single residual state, not a separate architectural block. Its vertex component satisfies
\begin{equation}
 \dot a_t=\gain \Prop a_t,
 \qquad a_0=\emb{\rootnode}.
\end{equation}

\begin{lemma}[Reachability from the matrix exponential]
\label{lem:walks}
Let $\walk{k}(\rootnode,v)$ be the number of directed walks of length $k$ from $\rootnode$ to $v$. Then
$a_t=e^{\gain t\Prop}\emb{\rootnode}$ and, for every $v\in\vertices$,
\begin{enumerate}
\item[(i)] $\displaystyle \langle\emb{v},\vz_t\rangle=\langle\emb{v},a_t\rangle=\sum_{k\ge0}\frac{(\gain t)^k}{k!}\walk{k}(\rootnode,v)\ge0$;
\item[(ii)] for $t > 0$, this coordinate is positive iff $v$ is reachable from $\rootnode$, and is exactly $0$ otherwise;
\item[(iii)] if $v$ is reachable, then $\displaystyle \langle\emb{v},\vz_1\rangle\ge\mgn:=\min_{0\le k\le \nmax-1}\frac{\gain^k}{k!}>0$.
\end{enumerate}
\end{lemma}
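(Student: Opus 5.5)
The plan is to solve the linear ODE explicitly and then read off coefficients using orthonormality. Since $\dot a_t=\gain\Prop a_t$ is linear with constant coefficients and $\Prop$ is a finite-dimensional operator, the unique solution is $a_t=e^{\gain t\Prop}\emb{\rootnode}=\sum_{k\ge0}\frac{(\gain t)^k}{k!}\Prop^k\emb{\rootnode}$. This series converges absolutely because $\lVert\Prop\rVert_2\le\sqrt{\nedge}$. Uniqueness and differentiation term by term are standard. Since $\Prop\emb{\tokA}=\mZero$ and $\emb{\tokA}\perp\emb{v}$ for every vertex $v$, we get $\langle\emb{v},\vz_t\rangle=\langle\emb{v},a_t\rangle$. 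Linearity of the inner product then lets us exchange it with the convergent sum, which reduces (i) to the identity $\langle\emb{v},\Prop^k\emb{\rootnode}\rangle=\walk{k}(\rootnode,v)$.

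This identity is proved by induction on $k$, and I will establish the stronger vector statement $\Prop^k\emb{\rootnode}=\sum_{w\in\vertices}\walk{k}(\rootnode,w)\,\emb{w}$.
\begin{itemize}
\item For $k=0$ the statement is the definition of zero-length walks.
\item For the inductive step, apply $\Prop=\sum_j\emb{\eh j}\emb{\eg j}^\top$. Orthonormality gives $\emb{\eg j}^\top\emb{w}=\mathbb{1}[w=\eg j]$, so
\[
\Prop^{k+1}\emb{\rootnode}=\sum_j \walk{k}(\rootnode,\eg j)\,\emb{\eh j}=\sum_{v}\Bigl(\sum_{j:\,\eh j=v}\walk{k}(\rootnode,\eg j)\Bigr)\emb{v}.
\]
The inner sum equals $\walk{k+1}(\rootnode,v)$, because every walk of length $k+1$ ending at $v$ splits uniquely into a walk of length $k$ followed by a last edge. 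Edges are distinct, so there is no double counting.
\end{itemize}
Taking the inner product with $\emb{v}$ gives (i). Nonnegativity is immediate since walk counts are nonnegative and $\gain,t\ge0$.

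For (ii), fix $t>0$. Every term of the series is nonnegative, so the coordinate is positive iff some $\walk{k}(\rootnode,v)>0$, which means $v$ is reachable. If $v$ is unreachable, every walk count vanishes and the coordinate is exactly $0$.

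For (iii), take $t=1$ and keep only the term $k^\star=\dist(\rootnode,v)$. A shortest path has no repeated vertices, so $k^\star\le\nv-1\le\nmax-1$. That path is itself a walk, so $\walk{k^\star}\ge1$, and dropping the other nonnegative terms yields a lower bound of $\gain^{k^\star}/k^\star!\ge\mgn$. Finally, $\mgn>0$ because it is a minimum of finitely many positive numbers.

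The only step needing care is the inductive walk-count identity, specifically the bookkeeping that orthonormality turns $\emb{\eg j}^\top$ into an exact selector. Everything else is routine manipulation of the exponential series.
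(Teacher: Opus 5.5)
Your proposal is correct and follows essentially the same route as the paper's proof. Both prove the walk-count identity $\Prop^k\emb{\rootnode}=\sum_v\walk{k}(\rootnode,v)\emb{v}$ by induction from orthonormality, expand the exponential for (i), use nonnegativity of every term for (ii), and keep the single shortest-path term for (iii). You simply give more detail where the paper is terse, for instance by peeling off the last edge in the inductive step and by justifying $\dist(\rootnode,v)\le\nmax-1$ through simple paths.
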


\begin{proof}
Orthonormality gives $\Prop\,\emb{x}=\sum_{j:\eg j=x}\emb{\eh j}$, hence by induction $\Prop^k\emb{u}=\sum_v\walk{k}(u,v)\emb{v}$ for every vertex $u$. Expanding the exponential proves (i). All terms are nonnegative, proving (ii). For reachable $v$, a shortest path has length $k^*=\dist(\rootnode,v)\le \nmax-1$ and $\walk{k^*}(\rootnode,v)\ge1$, so its single series term proves (iii). The equality $\langle\emb{v},\vz_t\rangle=\langle\emb{v},a_t\rangle$ follows from $\emb{v}\perp\emb{\tokA}$.
\end{proof}

\begin{proof}[Proof of \Cref{thm:existence}]
By \Cref{lem:cache}, layer~1 sets $\vz_0 = \emb{\tokA} + \emb{\rootnode}$. By
\Cref{def:prop}, the velocity at $\tokA$ is $\gain\Prop\vz_t$, whose time-$1$ flow
is $\vz_1 = \emb{\tokA} + e^{\gain\Prop}\emb{\rootnode}$ since
$\Prop\emb{\tokA} = \mZero$. \Cref{lem:walks}(i)--(ii) give the coefficients and
their signs; the reachable candidate has coefficient at least $\mgn$ by (iii) and
the unreachable one exactly $0$ by (ii), which gives the $\arg\max$ and the
margin. The weights depend only on $\nmax$, the fixed vocabulary, and
$\gain$, as required.
\end{proof}

\subsection{Euler discretization}
\label{app:euler}

\begin{proof}[Proof of \Cref{cor:euler}]
Since $\Prop\emb{\tokA}=\mZero$, Euler integration gives
\[
  \vz_{t_N}
  =
  \emb{\tokA}
  +
  \prod_{i=0}^{N-1}
  (\mI+\gain\Delta t_i\Prop)\emb{\rootnode},
  \qquad
  \Delta t_i=t_{i+1}-t_i>0.
\]
Expanding the product gives
$\sum_{j=0}^{N}a_j\Prop^j$, where $a_0=1$ and every $a_j>0$:
each coefficient is a sum of products of positive step sizes
and powers of $\gain$. By \Cref{lem:walks},
\[
  \langle\emb{v},\vz_{t_N}\rangle
  =
  \sum_{j=0}^{N}a_j\walk{j}(\rootnode,v).
\]
This coefficient is positive exactly when a walk of length at
most $N$ connects $\rootnode$ to $v$, equivalently when
$\dist(\rootnode,v)\le N$. The reachable candidate therefore
has a strictly larger score once
$N\ge\dist(\rootnode,\ansv)$.

If $N<\dist(\rootnode,\ansv)$, both candidate scores are zero,
so the readout does not yet separate them.

\end{proof}

\subsection{Noisy initialization}
\label{app:gaussian}

\begin{proof}[Proof of \Cref{rem:gaussian-main}]
For the exact flow with
$\vz_0=\emb{\tokA}+\emb{\rootnode}+\sigma\nse$,
where $\sigma>0$ and $\nse\sim\mathcal N(0,\mI)$,
\[
 \vz_t=\emb{\tokA}+e^{\gain t\Prop}\emb{\rootnode}+\sigma e^{\gain t\Prop}\nse.
\]
Let $g_v=\langle\emb{v},\nse\rangle$, which are i.i.d.\ $\mathcal N(0,1)$ by
orthonormality, and $\Gamma_{v,w}=\sum_k\gain^k\walk{k}(v,w)/k!\ge0$. By
\Cref{lem:walks}, for every vertex $w$,
\[
 \langle\emb{w},\vz_1\rangle=\Gamma_{\rootnode,w}+\sigma N_w,
 \qquad
 N_w=\sum_v g_v\Gamma_{v,w}\sim\mathcal N(0,\rho_w^2),
\]
with $1\le\rho_w\le e^{\gain\sqrt{\nedge}}$. Since every $\walk{k}(v,w)\ge0$, the
covariance of $N_{\cand1}$ and $N_{\cand2}$ is
$\sum_v\Gamma_{v,\cand1}\Gamma_{v,\cand2}\ge0$, so
$\mathrm{Var}(N_{\wrongv}-N_{\ansv})\le\rho_{\cand1}^2+\rho_{\cand2}^2
\le2e^{2\gain\sqrt{\nedge}}$. The readout fails only if
$\sigma(N_{\wrongv}-N_{\ansv})\ge\Gamma_{\rootnode,\ansv}$, and
$\Gamma_{\rootnode,\ansv}\ge\mgn$ by \Cref{lem:walks}(iii). The tail bound
$\Pr[Z\ge u]\le e^{-u^2/2}$ for $Z\sim\mathcal N(0,1)$ and $u\ge0$ gives
\[
 \Pr\!\left[\arg\max_{v\in\{\cand1,\cand2\}}\langle\emb{v},\vz_1\rangle=\ansv\right]
 \ge1-\exp\!\left(-\frac{\mgn^2}{4\sigma^2e^{2\gain\sqrt{\nedge}}}\right).
\]
The component of $\nse$ orthogonal to the vertex span is frozen and invisible to
the readout, and $\mathbb E[\vz_t]=\emb{\tokA}+e^{\gain t\Prop}\emb{\rootnode}$.
\end{proof}

\subsection{Readout over the full vocabulary}
\label{app:readout}

\begin{remark}[Full-vocabulary readout]
\label{rem:decoding}
Under the noiseless initialization of \Cref{thm:existence},
a reachable vertex other than $\ansv$ can have a larger
coefficient because the coefficients count walks. We use the
candidate embeddings stored in buffers~3 and~4 to define
\[
  S_v =
  \langle\emb{v},\vz_1\rangle
  + M_0\bigl(
    \langle\emb{v},\emb{\cand1}\rangle
    +\langle\emb{v},\emb{\cand2}\rangle
  \bigr).
  \qquad v\in\vocab.
\]
By orthonormality, the added term is $M_0$ on the two
candidates and zero elsewhere. Every vertex coefficient is
at most $e^{\gain\nmax}$; the answer-token coefficient is
one, and all other special-token coefficients are zero.
Thus, choosing $M_0=2e^{\gain\nmax}$ gives
$S_{\ansv}\ge M_0+\mgn$, $S_{\wrongv}=M_0$, and
$S_v<M_0$ for every other token.

The full-vocabulary readout therefore returns $\ansv$ with
margin at least $\mgn$. This is a fixed linear decoder on
the content slot and candidate buffers, with $M_0$ depending
only on $\nmax$ and $\gain$. The velocity field is unchanged.
\end{remark}

\section{Sphere Retraction}
\label{app:sphere}

\subsection{Local geometric effect}
\label{app:psi-angular-flow}

For a nonzero position vector, define
$\psi(z)=Rz/\lVert z\rVert_2$, with $R=\sqrt d$.
The following statement describes the local effect of this
retraction.

\begin{proposition}[Local effect of sphere retraction]
\label{prop:angular}
Let $v(z,t)$ be continuously differentiable near the sphere
$\lVert z\rVert_2=R$ over a fixed interval $[0,T]$, with $T<1$.
For $z$ on the sphere and sufficiently small $\Delta t$,
\[
  \psi\bigl(z+\Delta t\,v(z,t)\bigr)
  =
  z+\Delta t\,P_{\hat z}v(z,t)
  +\mathcal O(\Delta t^2),
  \qquad
  P_{\hat z}=\mI-\hat z\hat z^\top,
  \quad
  \hat z=z/R.
\]
The associated velocity is tangent to the sphere.
\end{proposition}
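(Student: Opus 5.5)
The plan is a first-order Taylor expansion of the retraction $\psi$ around a point on the sphere. Set $w=\Delta t\,v(z,t)$ and fix $z$ with $\lVert z\rVert_2=R$. The retraction $\psi(x)=Rx/\lVert x\rVert_2$ is smooth on $\mathbb{R}^d\setminus\{0\}$. Its Jacobian at a point $x\neq0$ is
\[
  D\psi(x)=\frac{R}{\lVert x\rVert_2}\Bigl(\mI-\frac{xx^\top}{\lVert x\rVert_2^2}\Bigr).
\]
At $x=z$ on the sphere this reduces to $P_{\hat z}=\mI-\hat z\hat z^\top$, and moreover $\psi(z)=z$. Taylor's theorem with remainder therefore gives
\[
  \psi(z+w)=z+P_{\hat z}w+r(w),
  \qquad
  \lVert r(w)\rVert\le \tfrac12\sup_{s\in[0,1]}\lVert D^2\psi(z+sw)\rVert\,\lVert w\rVert^2 .
\]

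The next step bounds the remainder uniformly. Continuity of $v$ on a compact neighbourhood of the sphere times $[0,T]$ gives a bound $\lVert v\rVert\le C$, so $\lVert w\rVert\le C\Delta t$. For $\Delta t\le R/(2C)$, the segment $z+sw$ stays in the annulus $R/2\le\lVert x\rVert\le 3R/2$. On that annulus $D^2\psi$ is bounded, since its entries are rational in $x$ with denominators that are powers of $\lVert x\rVert$. Hence $r=\mathcal{O}(\Delta t^2)$, uniformly in $z$ and $t$. Substituting $w=\Delta t\,v(z,t)$ yields the displayed identity.

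The step that needs most care is this uniformity. I would handle it by choosing the neighbourhood and $\Delta t_{\max}$ first, depending only on $C$ and $R$, and only then bounding the remainder. The restriction $T<1$ matters here because the paper's velocity carries a $1/(1-t)$ factor, and $T<1$ keeps $C$ finite.

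For tangency, note that $\hat z^\top P_{\hat z}v=\hat z^\top v-\hat z^\top v=0$. So the effective velocity $P_{\hat z}v$ is orthogonal to $z$, which is exactly the tangent space of the sphere at $z$. Consistently, $\frac{d}{d\Delta t}\lVert\psi(\cdot)\rVert^2=0$ trivially, since $\psi$ always lands on the sphere.
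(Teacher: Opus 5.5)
Your proposal is correct and follows the paper's proof. It uses the same Jacobian $D\psi(z)=\frac{R}{\lVert z\rVert_2}\bigl(\mI-\frac{zz^\top}{\lVert z\rVert_2^2}\bigr)$, a first-order Taylor expansion at a point of the sphere where this reduces to $P_{\hat z}$, and the orthogonality $\langle z,P_{\hat z}v\rangle=0$ for tangency; you additionally make the $\mathcal O(\Delta t^2)$ remainder explicit and uniform (bounded $v$ on a compact set, segment kept in the annulus $R/2\le\lVert x\rVert_2\le 3R/2$), which the paper leaves implicit.
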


\begin{proof}
For nonzero $z$,
\[
  D\psi(z)
  =
  \frac{R}{\lVert z\rVert_2}
  \left(\mI-\frac{zz^\top}{\lVert z\rVert_2^2}\right).
\]
Taylor expansion at a point with $\lVert z\rVert_2=R$
gives the stated formula. Since
$\langle z,P_{\hat z}v\rangle=0$, the resulting
continuous-time dynamics preserve the norm.
\end{proof}

\paragraph{Relation to the implemented update.}
The algorithms in \Cref{app:long-rollout} evaluate the
denoiser at the raw state and use its retracted version
as the update base. If their difference is
$\mathcal O(h)$, where $h$ is the largest grid spacing,
local Lipschitz continuity of the denoiser makes the
prediction difference $\mathcal O(h)$.
On an interval bounded away from $t=1$, this changes an
update by $\mathcal O(\Delta t_k h)$.
The proposition therefore describes the leading-order
interior dynamics under these conditions. It does not
establish convergence through the endpoint $t=1$.

\subsection{Retraction ablation and norm diagnostic}

We trained two Sudoku-Extreme models with and without
$\psi$, using the same time-conditioned,
constant-learning-rate recipe. At $\Neval=64$, their
best recorded test accuracies were $67\%$ with retraction
and $55\%$ without it. Both models were trained separately.
The ablation uses a different recipe from the
cosine-schedule runs in the main comparison.

In the norm diagnostic (\Cref{fig:norm_growth}), evaluated over a $252$-step rollout, 
the model trained without
retraction reaches a latent norm roughly $44{,}000$
times its initial scale. Retraction limits this growth,
but does not hold the raw state at a fixed norm.
The fixed radius applies to the retracted update base.

The construction in \Cref{sec:theory} also allows the latent norm
to grow during propagation, while its final candidate decision is
unchanged by positive rescaling.

\begin{figure}[t]
  \centering
  \includegraphics[width=0.75\linewidth]{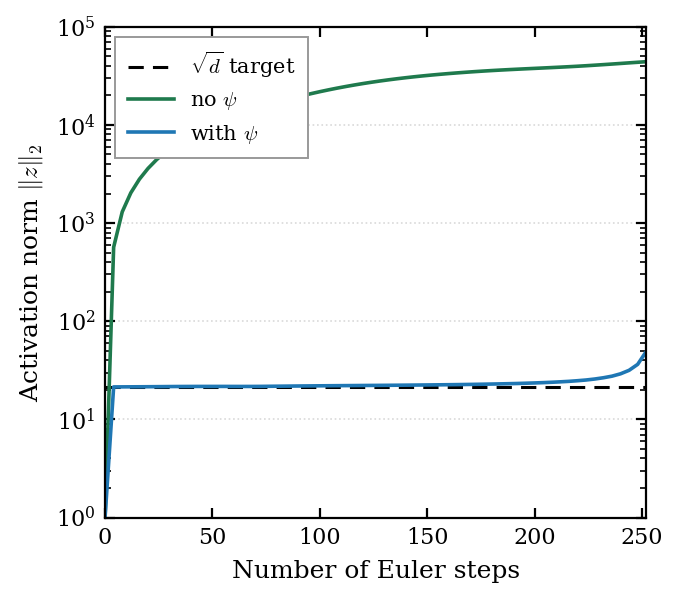}
  \caption{Raw latent norms in separately trained
  Sudoku-Extreme models with and without retraction.
  The dashed line marks $\sqrt{448}$, the norm of the
  retracted update base. The raw state itself need not
  remain on this sphere.}
  \label{fig:norm_growth}
\end{figure}

\section{Long-Rollout Training and Inference}
\label{app:long-rollout}

This appendix gives the full procedure used for the long Sudoku and Maze
rollouts. Sphere retraction is applied independently to each answer position:
\[
  \psi(\vz^{(i)})=
  \sqrt{d}\,\frac{\vz^{(i)}}{\lVert \vz^{(i)}\rVert_2},
  \qquad i=1,\ldots,L.
\]

The denoiser reads the raw state. After the first step,
retraction is applied only to the update base. Noise scales
and label smoothing are given in
\Cref{tab:ufm-hyperparameters}.

\begin{algorithm}[t]
  \caption{Long-rollout UFM training on one batch with sphere retraction}
  \label{alg:ufm-long-rollout}
  \begin{algorithmic}[1]
    \REQUIRE Batch $(c,\vy)$, parameters $\theta$,
    rollout length $\Ntrain$,
    backpropagation length $1\le\Nback\le\Ntrain$,
    time-grid distribution $q$, noise scale $\sigma$,
    and retraction $\psi$

    \STATE Sample noise $\nse\sim\mathcal{N}(0,\mI)$
    in the latent answer space
    \STATE Sample $0\le t_0<t_1<\cdots<t_{\Ntrain}=1$
    from $q$
    \STATE $\vz_{t_0}\gets
    (1-t_0)\sigma\nse+t_0\operatorname{Embed}(\vy)$

    \FOR{$k=0$ \TO $\Ntrain-1$}
      \STATE Evaluate this update without gradient tracking
      when $k<\Ntrain-\Nback$
      \STATE $\Delta t_k\gets t_{k+1}-t_k$
      \STATE $\dnz^{(k)}\gets\dnz(\vz_{t_k},t_k,c)$

      \IF{$t_{k+1}=1$}
        \STATE $\vz_{t_{k+1}}\gets\dnz^{(k)}$
        \COMMENT{final endpoint prediction}
      \ELSE
        \STATE $y_{t_k}\gets\vz_{t_k}$
        \IF{$k>0$}
          \STATE $y_{t_k}\gets\psi(\vz_{t_k})$
        \ENDIF
        \STATE $\vfield^{(k)}\gets
        (\dnz^{(k)}-y_{t_k})/(1-t_k)$
        \STATE $\vz_{t_{k+1}}\gets
        y_{t_k}+\Delta t_k\,\vfield^{(k)}$
      \ENDIF

      \IF{$k<\Ntrain-\Nback$}
        \STATE $\vz_{t_{k+1}}\gets
        \operatorname{sg}(\vz_{t_{k+1}})$
      \ENDIF
    \ENDFOR

    \STATE $\hat{\vy}\gets\operatorname{StableMax}
    \bigl(\operatorname{RMSNorm}
    (\vz_{t_{\Ntrain}})\Wdec^\top\bigr)$
    \COMMENT{decode once}
    \STATE $\mathcal{L}\gets\operatorname{CE}(\hat{\vy},\vy)$
    \COMMENT{with the specified label smoothing}
    \RETURN $\mathcal{L},\nabla_\theta\mathcal{L}$
  \end{algorithmic}
\end{algorithm}

\begin{algorithm}[t]
  \caption{UFM inference with sphere retraction}
  \label{alg:ufm-inference}
  \begin{algorithmic}[1]
    \REQUIRE Prompt $c$, EMA parameters $\theta^{\mathrm{EMA}}$,
    evaluation grid $0=t_0<\cdots<t_{\Neval}=1$,
    noise scale $\sigma$, and retraction $\psi$

    \STATE Use $\theta^{\mathrm{EMA}}$ for the denoiser
    and decoder
    \STATE Sample $\nse\sim\mathcal{N}(0,\mI)$
    in the latent answer space
    \STATE $\vz_{t_0}\gets\sigma\nse$

    \FOR{$k=0$ \TO $\Neval-1$}
      \STATE $\Delta t_k\gets t_{k+1}-t_k$
      \STATE $\dnz^{(k)}\gets\dnz(\vz_{t_k},t_k,c)$

      \IF{$t_{k+1}=1$}
        \STATE $\vz_{t_{k+1}}\gets\dnz^{(k)}$
        \COMMENT{final endpoint prediction}
      \ELSE
        \STATE $y_{t_k}\gets\vz_{t_k}$
        \IF{$k>0$}
          \STATE $y_{t_k}\gets\psi(\vz_{t_k})$
        \ENDIF
        \STATE $\vfield^{(k)}\gets
        (\dnz^{(k)}-y_{t_k})/(1-t_k)$
        \STATE $\vz_{t_{k+1}}\gets
        y_{t_k}+\Delta t_k\,\vfield^{(k)}$
      \ENDIF
    \ENDFOR

    \STATE $\hat{\vy}\gets\operatorname{StableMax}
    \bigl(\operatorname{RMSNorm}
    (\vz_{t_{\Neval}})\Wdec^\top\bigr)$
    \RETURN The highest-probability token at each position
  \end{algorithmic}
\end{algorithm}

\section{Experimental Details}
\label{app:exp}

This appendix describes the data splits, training settings, and
evaluation procedures used in our experiments.

\paragraph{Reporting convention.}
For the single-rollout results in \Cref{tab:ufm-pass1},
each noise seed gives one independently initialized rollout per
test example, using a fixed checkpoint. We report the mean and
standard deviation of exact-match accuracy over five noise seeds.
Values are rounded to one decimal place, so a displayed standard
deviation of $0.0$ may be nonzero before rounding.
The reported checkpoints have the DiT's time input disabled.

\begin{table}[t]
  \centering
  \caption{UFM single-rollout accuracy with $\Neval=128$ and a
  uniform time grid. Results are percentages, reported as mean
  $\pm$ standard deviation over five noise seeds.}
  \label{tab:ufm-pass1}
  \begin{tabular}{@{}lrr@{}}
    \toprule
    Benchmark & Test examples & Pass@1 (\%) \\
    \midrule
    Sudoku-Extreme & $422{,}786$ & $74.4\pm0.0$ \\
    Sudoku-Hard   & $2{,}000$   & $86.9\pm0.9$ \\
    Maze-Hard     & $1{,}000$   & $89.3\pm0.5$ \\
    \bottomrule
  \end{tabular}
\end{table}

\subsection{Selecting among rollouts}
\label{app:chooser}

We evaluate $K=100$ rollouts per problem on a $2000$-puzzle subset of the Sudoku-Extreme test set and on all $1000$ Maze-Hard test mazes.
On the Sudoku subset, Pass@1 is $74.2\%$, Pass@100 is
$98.65\%$, and margin-based selection reaches $98.6\%$.
On Maze-Hard, margin-based selection reaches $92.0\%$,
compared with $95.0\%$ Pass@100.

We select among the $K$ rollouts using the mean gap between
the largest and second-largest raw logits,
\[
  \widehat{\mgn}
  =
  \frac{1}{L}\sum_{i=1}^{L}
  \bigl(\ell_i^{(1)}-\ell_i^{(2)}\bigr).
\]
Here, $\ell_i^{(1)}$ and $\ell_i^{(2)}$ are the two largest
logits at position $i$, before probability normalization.
This parameter-free score is an output-space analogue of
the separation margin in \Cref{thm:existence}. 

Selection is less effective on Maze-Hard. A Sudoku answer
has $81$ positions, whereas a Maze answer contains $900$
cells describing the maze layout and solution path.
Averaging scores over cells shared by different candidate
answers can dilute differences in their paths.

On a probe of $300$ Maze boards with $50$ rollouts each at $\Neval{=}128$, a subset of the boards in the evaluation above, $18$ boards are recoverable, meaning that the first rollout is wrong but at least one of the $50$ is correct; the remaining $282$ boards are either already solved by the first rollout or solved by none of the $50$. On these $18$ boards, correct and incorrect rollouts differ in mean margin by $0.07$, for mean margins around $200$ in logit units, and the chooser picks a correct rollout on $9$ of $18$, against $38\%$ for a uniformly random pick among the distinct answers.

These diagnostics suggest that the mean logit margin is less
reliable for selecting long maze answers and a learned scorer, such as the latent reward model of
GRAM \citep{gram}, or the halting mechanism used in \citet{JolicoeurMartineau2025,wang2025hrm} is a natural replacement.

\subsection{ProsQA}
FLM and UFM use a DiT backbone; S-FLM keeps the architecture of \citet{HypersphericalFlows2026}, which
was designed for that method. All models have two layers and approximately $15.7$M
parameters and are trained on the same data. As in \citet{zhu2025reasoning}, we use the ProsQA questions whose answer is at most four hops from the root: $14{,}785$ training, $257$ validation, and $419$ test questions, out of the $17{,}886$ / $300$ / $500$ released. UFM is trained with $\Ntrain=5$
steps, so each example costs it $5$ forward and backward passes. To approximately match the number of forward and backward passes, FLM and S-FLM see $5\times$ more training examples: $43{,}500$ optimizer steps
at batch size $512$ (${\approx}22.3$M examples), against $34{,}500$ steps at batch
size $128$ for UFM (${\approx}4.4$M examples). UFM uses a smaller batch to fit the rollout in memory. Gradients flow through all five updates, but the gradient passed back through the latent state is scaled by $0.7$ at each update; the forward value is unchanged.


\subsection{Comparison with Recursive Reasoners}
We also place UFM in context against published recursive reasoners. These rows are reported from the respective papers rather than reruns, so they provide a reference point rather than a controlled pairwise comparison.

\begin{table}[t]
  \centering
  \small
  \caption{Test accuracy (\%) on Sudoku-Extreme and Maze-Hard with a single
inference trajectory (Pass@1). Recursive-reasoner numbers are as reported in
the respective papers; we did not rerun them. FLM and S-FLM are our
runs of the authors' released code with the default greedy final step, at their
best checkpoint and inference setting. UFM uses $\Neval=128$ on a uniform grid; its results are means
and standard deviations over five noise seeds, evaluated on
$422{,}786$ Sudoku-Extreme puzzles and $1000$ Maze-Hard mazes. FRM's entry is its reported peak on a fixed $1000$-puzzle
Sudoku-Extreme test subset. Its maze evaluation uses
Maze-Unique, so no Maze-Hard result is listed.}

  \label{tab:main-single}
  \begin{tabular}{@{}lrrr@{}}
    \toprule
    Model & Params & Sudoku-Extreme & Maze-Hard \\
    \midrule
    \multicolumn{4}{@{}l}{\textit{Recursive reasoners}} \\
    \quad Attractor Model   \citep{attractor}              & 7M    & 54.3\%  & 46.7\% \\
    \quad HRM  \citep{wang2025hrm}         & 27M   & 55.0\%  & 74.5\% \\
    \quad URM   \citep{urm}                 & 14M   & 77.6\%  & --     \\
    \quad TRM$^{a}$ \citep{JolicoeurMartineau2025}                      & 5M/7M & 84.1\%  & 85.3\% \\
    \quad Attractor Model$^{b}$  \citep{attractor}         & 27M   & 91.4\%  & 93.1\% \\
    \quad EqR ($D{=}64$)$^{c}$ \citep{eqr}         & 5M    & 93.0\%  & --     \\
    \quad FPRM  \citep{fprm}                          & 7M    & 94.2\%  & 87.0\% \\
    \addlinespace[2pt]
    \midrule
    \multicolumn{4}{@{}l}{\textit{Flow models}} \\
    \quad FLM \citep{FlowMapLM2026}                        & 28.6M$^{\dagger}$ & 10.7\%  & 40.4\% \\
\quad S-FLM \citep{HypersphericalFlows2026} & 28.6M$^{\dagger}$ & \phantom{0}9.4\% & 49.9\% \\ 
\quad UFM (ours) & 8.4M & $74.4\pm0.0\%$ & $89.3\pm0.5\%$\\
\quad FRM \citep{helbling2026frm} & 7M & 99.5\%$^{*}$ & -- \\ 

    \bottomrule
  \end{tabular}
  \par\smallskip\raggedright\footnotesize
  $^{a}$TRM-MLP (5M) on Sudoku, reproduced value from \citet{eqr}; the original
  paper reports 87.4\%. TRM-Att (7M) on Maze-Hard as reported.
  $^{b}$Not reproduced by \citet{fprm}, who obtain 71.4\% on Sudoku.
  $^{c}$$D{=}64$ outer iterations at test time, $4\times$ the training budget;
  at $D{=}16$ the result is 86.4\%.
  $^{\dagger}$At $8$M, both baselines failed to train stably, reaching $1.5\%$ on
Sudoku-Extreme and then declining. $^{*}$FRM reports this peak on a fixed $1000$-puzzle
\emph{subset} of the Sudoku-Extreme test set, not the full
$422{,}786$-puzzle test set used for our Pass@1 evaluation.

\end{table}

\begin{table}[t]
  \centering
  \small
  \caption{Test accuracy (\%) on Sudoku-Extreme and Maze-Hard with multiple
  inference rollouts. \emph{Rollouts} is the number of trajectories drawn at
  inference, followed by selection or voting. Recursive-reasoner numbers are as
  reported in the respective papers. UFM uses $N_{\mathrm{eval}}{=}128$ Euler
  steps and $K{=}100$ rollouts with the chooser of \Cref{sec:chooser}. UFM is evaluated on a $2000$-puzzle Sudoku-Extreme test subset
and all $1000$ Maze-Hard test mazes (\Cref{app:chooser}).}
  \label{tab:main-multi}
  \begin{tabular}{@{}lrrrr@{}}
    \toprule
    Model & Params & Rollouts & Sudoku-Extreme & Maze-Hard \\
    \midrule
    \multicolumn{5}{@{}l}{\textit{Recursive reasoners}} \\
    \quad GRAM$^{a}$  \citep{gram}                    & 10M   & 20  & 97.0\%  & --     \\
    \quad PTRM$^{b}$  \citep{ptrm}                    & 5M/7M & 100 & 98.75\% & 86.73\% \\
    \quad EqR ($D{=}64$)$^{c,d}$  \citep{eqr}        & 5M    & 128 & 99.8\%  & --     \\
    \addlinespace[2pt]
    \midrule
    \multicolumn{5}{@{}l}{\textit{Flow models}} \\
    \quad UFM (ours) & 8.4M & 100 & 98.6\% & 92.0\% \\
    \bottomrule
  \end{tabular}
  \par\smallskip\raggedright\footnotesize
  $^{a}$$20$ parallel trajectories selected by a learned latent reward model.
  $^{b}$Inference-only on a TRM checkpoint; best-Q@100 on Sudoku, majority vote
  on Maze; Pass@100 is 99.06\% / 95.63\%.
  $^{c}$$D{=}64$ outer iterations at test time, $4\times$ the training budget.
  $^{d}$EqR reports 93.0\% on Maze-Unique, a filtered variant, not Maze-Hard.
\end{table}

\subsection{Baseline replication on Sudoku-Hard}
\label{app:sflm-replication}
We use the best tested checkpoint of each baseline at $30$k
training steps, corresponding to $1.6$ hours for FLM and
$1.7$ hours for S-FLM. For S-FLM, we used the truncated schedule in the authors' released code (the $43.2\%$ setting in Table~1 of \citet{HypersphericalFlows2026}), not the adaptive-schedule variant.

The released split has no validation set, so each baseline checkpoint is selected on
the same $2$k puzzles on which it is reported, which favors the baselines. The
UFM checkpoint is instead selected on a separate $2$k validation split (best
exact match at $N{=}64$); this is the final checkpoint ($19.7$k steps,
$1.1$\,h). The updated evaluation reaches $86.9\pm0.9\%$ at $N=128$
over five noise seeds. \Cref{tab:acc-vs-n} reports
accuracy as a function of the number of sampling steps, with the default greedy
final step. With ancestral noise removal at $N{=}128$, S-FLM reaches
$50.9\%$, the number in \Cref{tab:sflm_hard}; FLM has no such option. Training
times exclude evaluation.

\begin{table}[h]
\centering\small
\caption{Accuracy (\%) on the hard Sudoku split of \citet{HypersphericalFlows2026}
as a function of the number of sampling steps $N$, with the default greedy final
step and without ancestral noise removal. FLM and S-FLM are our runs
of the authors' released code ($28.6$M parameters) with the cosine schedule of
their code base, at their best checkpoint ($30$k steps), selected on these same
puzzles, which favors the baselines. With ancestral noise removal at $N{=}128$,
S-FLM reaches $50.9\%$ (\Cref{tab:sflm_hard}); with the released
constant schedule it reaches $36.8\%$, and its authors report $43.2\%$ with the truncated linear noise schedule and $45.0\%$ with the additional adaptive schedule. UFM
($8.4$M parameters) is trained with $24$-step rollouts and selected on a
separate validation split. Training is training-only wall-clock time on one A100. The UFM row reports an earlier single-run step-count sweep.
The updated five-noise-seed result at $N=128$ is
$86.9\pm0.9\%$ (\Cref{tab:ufm-pass1}). The second UFM row uses the same setup with a longer training
budget of $22$ hours. We report it separately from the main
comparison in \Cref{tab:sflm_hard}.}
\label{tab:acc-vs-n}
\begin{tabular}{llrrrrr}
\toprule
Model & Train (h) & $N{=}8$ & $16$ & $32$ & $64$ & $128$ \\
\midrule
FLM              & 1.6\,h   & 51.2 & 51.8 & 51.8 & 51.6 & 51.9 \\
S-FLM & 1.7\,h   & 48.9 & 49.5 & 50.0 & 50.2 & 50.1 \\
\midrule
UFM (ours)       & 1.1\,h   & 2.8  & 65.1 & 80.2 & 84.4 & 86.1 \\
UFM (ours)       & 22\,h     & 64.1 & 97.0 & 99.4 & 99.8 & 99.8 \\
\bottomrule

\end{tabular}
\end{table}

\subsection{Hyperparameters}
\label{app:ufm-hyperparameters}

\paragraph{UFM on Sudoku and Maze.}
All three benchmarks use a two-layer DiT with hidden width $448$, eight
attention heads, and $8.4$M parameters. The backbone uses pre-norm RMSNorm,
SwiGLU with expansion factor four, adaLN-Zero, two-dimensional rotary
position embeddings, and $16$ puzzle prefix tokens. The implementation
supports sinusoidal time conditioning with frequency dimension $256$.
The reported checkpoints disable this input; flow time still enters through
the update coefficients.

We use AdamW with $\beta=(0.9,0.95)$, gradient clipping at $1.0$, dropout
$0.1$, and an exponential moving average of the weights with decay $0.999$.
After warmup, the learning rate follows a cosine schedule from $10^{-4}$
to $10^{-5}$. We retain weights and accumulation in FP32 and use BF16
matrix multiplications. Settings that differ across benchmarks are
listed in \Cref{tab:ufm-hyperparameters}.

\begin{table}[t]
  \centering
  \caption{Benchmark-specific UFM training settings. The noise scale
  $\sigma$ is the standard deviation of each latent noise coordinate.
  The Sudoku-Hard column describes the run used for the main comparison.}
  \label{tab:ufm-hyperparameters}
  \begin{tabular}{@{}lccc@{}}
    \toprule
    Setting & Sudoku-Extreme & Maze-Hard & Sudoku-Hard \\
    \midrule
    Global batch size       & $768$ & $128$ & $256$ \\
    Noise scale $\sigma$    & $1/\sqrt{448}$ & $1$ & $1/\sqrt{448}$ \\
    Label smoothing         & $0.1$ & $0$ & $0.1$ \\
    Weight decay            & $0.75$ & $1.0$ & $0.75$ \\
    Warmup steps            & $2{,}000$ & $2{,}000$ & $500$ \\
    Training budget (steps) & $130$k & $125$k & $19.7$k \\
    \bottomrule
  \end{tabular}
\end{table}

\paragraph{Training grid and terminal loss.}
We train with $\Ntrain=24$ updates and backpropagate through the final
$\Nback=6$; the preceding updates run without gradient tracking.
For each example, we sample
$t_0\sim\mathcal{U}[0,t_{0,\max}]$ and initialize the state from the
noise--answer interpolant. The upper bound $t_{0,\max}$ decreases from
$0.6$ to $0.2$ over the first $30\%$ of training. We sample and sort
$23$ further times in $(t_0,1)$ independently for each example.

The final network call returns the predicted endpoint directly,
equivalent to taking the last Euler update to $t=1$. The decoder applies
RMS normalization followed by an untied linear vocabulary projection.
We compute one StableMax cross-entropy loss at this endpoint, with the
label smoothing shown in \Cref{tab:ufm-hyperparameters}.

\subsection{Computing Resources}

UFM training uses one NVIDIA A100 on Sudoku-Extreme and Sudoku-Hard,
and two A100s on Maze-Hard. Training time excludes evaluation and is
estimated from the median steady-state iteration rate in the logs.

On Sudoku-Hard, UFM takes $1.14$ hours for $19.7$k optimization steps.
FLM and S-FLM take $1.07$ and $1.12$ hours, respectively, for $20$k
steps. Their selected $30$k-step checkpoints require $1.61$ and
$1.68$ hours of training. These estimates give the rounded training
times reported in the main comparison.
\end{document}

%% file: math_commands.tex
\usepackage{amsmath,amsfonts,bm}

\def\eqref#1{equation~\ref{#1}}

\def\1{\bm{1}}

\def\vh{{\bm{h}}}

\def\vp{{\bm{p}}}

\def\vw{{\bm{w}}}
\def\vx{{\bm{x}}}
\def\vy{{\bm{y}}}
\def\vz{{\bm{z}}}

\def\mI{{\bm{I}}}

\def\mW{{\bm{W}}}

\DeclareMathAlphabet{\mathsfit}{\encodingdefault}{\sfdefault}{m}{sl}
\SetMathAlphabet{\mathsfit}{bold}{\encodingdefault}{\sfdefault}{bx}{n}



%% file: iclr2027_conference.bib
@misc{wang2025hrm,
      title={Hierarchical Reasoning Model}, 
      author={Guan Wang and Jin Li and Yuhao Sun and Xing Chen and Changling Liu and Yue Wu and Meng Lu and Sen Song and Yasin Abbasi Yadkori},
      year={2025},
      eprint={2506.21734},
      archivePrefix={arXiv},
      primaryClass={cs.AI},
      url={https://arxiv.org/abs/2506.21734}, 
}

@misc{eqr,
      title={Equilibrium Reasoners: Learning Attractors Enables Scalable Reasoning}, 
      author={Benhao Huang and Zhengyang Geng and Zico Kolter},
      year={2026},
      eprint={2605.21488},
      archivePrefix={arXiv},
      primaryClass={cs.LG},
      url={https://arxiv.org/abs/2605.21488}, 
}

@misc{gram,
      title={Generative Recursive Reasoning}, 
      author={Junyeob Baek and Mingyu Jo and Minsu Kim and Mengye Ren and Yoshua Bengio and Sungjin Ahn},
      year={2026},
      eprint={2605.19376},
      archivePrefix={arXiv},
      primaryClass={cs.AI},
      url={https://arxiv.org/abs/2605.19376}, 
}

@misc{attractor,
      title={Solve the Loop: Attractor Models for Language and Reasoning}, 
      author={Jacob Fein-Ashley and Paria Rashidinejad},
      year={2026},
      eprint={2605.12466},
      archivePrefix={arXiv},
      primaryClass={cs.LG},
      url={https://arxiv.org/abs/2605.12466}, 
}

@misc{urm,
      title={Universal Reasoning Model}, 
      author={Zitian Gao and Lynx Chen and Yihao Xiao and He Xing and Ran Tao and Haoming Luo and Joey Zhou and Bryan Dai},
      year={2025},
      eprint={2512.14693},
      archivePrefix={arXiv},
      primaryClass={cs.AI},
      url={https://arxiv.org/abs/2512.14693}, 
}

@misc{suleymanzade2026thinkingloopedflows,
      title={Thinking with Looped Flows}, 
      author={Ayhan Suleymanzade and Chanhyuk Lee and Floor Eijkelboom and Nicholas M. Boffi and İsmail İlkan Ceylan and Jinwoo Kim},
      year={2026},
      eprint={2609.11801},
      archivePrefix={arXiv},
      primaryClass={cs.LG},
      url={https://arxiv.org/abs/2609.11801}, 
}

@misc{helbling2026frm,
      title={Flow Reasoning Models: Turning Flows Into Efficient Recurrent Reasoners}, 
      author={Alec Helbling and Andrey Bryutkin and Mauro Martino and Duen Horng Chau and Nima Dehmamy and Hendrik Strobelt},
      year={2026},
      eprint={2606.29150},
      archivePrefix={arXiv},
      primaryClass={cs.AI},
      url={https://arxiv.org/abs/2606.29150}, 
}

@misc{JolicoeurMartineau2025,
      title={Less is More: Recursive Reasoning with Tiny Networks}, 
      author={Alexia Jolicoeur-Martineau},
      year={2025},
      eprint={2510.04871},
      archivePrefix={arXiv},
      primaryClass={cs.LG},
      url={https://arxiv.org/abs/2510.04871}, 
}

@misc{FlowMapLM2026,
      title={Flow Map Language Models: One-step Language Modeling via Continuous Denoising}, 
      author={Chanhyuk Lee and Jaehoon Yoo and Manan Agarwal and Sheel Shah and Jerry Huang and Aditi Raghunathan and Seunghoon Hong and Nicholas M. Boffi and Jinwoo Kim},
      year={2026},
      eprint={2602.16813},
      archivePrefix={arXiv},
      primaryClass={cs.CL},
      url={https://arxiv.org/abs/2602.16813}, 
}

@misc{CategoricalFlowMaps2026,
      title={Categorical Flow Maps}, 
      author={Daan Roos and Oscar Davis and Floor Eijkelboom and Michael Bronstein and Max Welling and İsmail İlkan Ceylan and Luca Ambrogioni and Jan-Willem van de Meent},
      year={2026},
      eprint={2602.12233},
      archivePrefix={arXiv},
      primaryClass={cs.LG},
      url={https://arxiv.org/abs/2602.12233}, 
}

@misc{ELF2026,
      title={{ELF}: Embedded Language Flows}, 
      author={Keya Hu and Linlu Qiu and Yiyang Lu and Hanhong Zhao and Tianhong Li and Yoon Kim and Jacob Andreas and Kaiming He},
      year={2026},
      eprint={2605.10938},
      archivePrefix={arXiv},
      primaryClass={cs.CL},
      url={https://arxiv.org/abs/2605.10938}, 
}

@misc{ptrm,
      title={Probabilistic Tiny Recursive Model}, 
      author={Amin Sghaier and Ali Parviz and Alexia Jolicoeur-Martineau},
      year={2026},
      eprint={2605.19943},
      archivePrefix={arXiv},
      primaryClass={cs.AI},
      url={https://arxiv.org/abs/2605.19943}, 
}

@misc{HypersphericalFlows2026,
      title={Language Modeling with Hyperspherical Flows}, 
      author={Justin Deschenaux and Caglar Gulcehre},
      year={2026},
      eprint={2605.11125},
      archivePrefix={arXiv},
      primaryClass={cs.LG},
      url={https://arxiv.org/abs/2605.11125}, 
}

@misc{fprm,
      title={Fixed-Point Reasoners: Stable and Adaptive Deep Looped Transformers}, 
      author={Sajad Movahedi and Vera Milovanović and Shlomo Libo Feigin and Alexander Theus and Thomas Hofmann and Valentina Boeva and T. Konstantin Rusch and Antonio Orvieto},
      year={2026},
      eprint={2606.18206},
      archivePrefix={arXiv},
      primaryClass={cs.AI},
      url={https://arxiv.org/abs/2606.18206}, 
}

@misc{SelfConditionedFlowMap2026,
      title={Self-conditioned Flow Map Language Models via Fixed-point Flows}, 
      author={Jaehoon Yoo and Wonjung Kim and Floor Eijkelboom and Chanhyuk Lee and Nicholas M. Boffi and Seunghoon Hong and Jinwoo Kim},
      year={2026},
      eprint={2607.00714},
      archivePrefix={arXiv},
      primaryClass={cs.CL},
      url={https://arxiv.org/abs/2607.00714}, 
}

@misc{ScalingCategoricalFlowMaps2026,
      title={Scaling Categorical Flow Maps}, 
      author={Oscar Davis and Anastasiia Filippova and Pierre Ablin and Victor Turrisi and Amitis Shidani and Marco Cuturi and Louis Béthune},
      year={2026},
      eprint={2605.07820},
      archivePrefix={arXiv},
      primaryClass={cs.LG},
      url={https://arxiv.org/abs/2605.07820}, 
}

@misc{zhu2025reasoning,
      title={Reasoning by Superposition: A Theoretical Perspective on Chain of Continuous Thought}, 
      author={Hanlin Zhu and Shibo Hao and Zhiting Hu and Jiantao Jiao and Stuart Russell and Yuandong Tian},
      year={2025},
      eprint={2505.12514},
      archivePrefix={arXiv},
      primaryClass={cs.LG},
      url={https://arxiv.org/abs/2505.12514}, 
}

@article{Albergo2023StochasticInterpolants,
  author  = {Michael Albergo and Nicholas M. Boffi and Eric Vanden-Eijnden},
  title   = {Stochastic Interpolants: A Unifying Framework for Flows and Diffusions},
  journal = {Journal of Machine Learning Research},
  year    = {2025},
  volume  = {26},
  number  = {209},
  pages   = {1--80},
  url     = {http://jmlr.org/papers/v26/23-1605.html}
}

@misc{prieto2025stablemax,
      title={Grokking at the Edge of Numerical Stability}, 
      author={Lucas Prieto and Melih Barsbey and Pedro A. M. Mediano and Tolga Birdal},
      year={2025},
      eprint={2501.04697},
      archivePrefix={arXiv},
      primaryClass={cs.LG},
      url={https://arxiv.org/abs/2501.04697}, 
}

@misc{snell2024testtime,
      title={Scaling {LLM} Test-Time Compute Optimally can be More Effective than Scaling Model Parameters}, 
      author={Charlie Snell and Jaehoon Lee and Kelvin Xu and Aviral Kumar},
      year={2024},
      eprint={2408.03314},
      archivePrefix={arXiv},
      primaryClass={cs.LG},
      url={https://arxiv.org/abs/2408.03314}, 
}

@inproceedings{
Lipman2023FlowMatching,
title={Flow Matching for Generative Modeling},
author={Yaron Lipman and Ricky T. Q. Chen and Heli Ben-Hamu and Maximilian Nickel and Matthew Le},
booktitle={The Eleventh International Conference on Learning Representations },
year={2023},
url={https://openreview.net/forum?id=PqvMRDCJT9t}
}

@inproceedings{
Liu2023RectifiedFlow,
title={Flow Straight and Fast: Learning to Generate and Transfer Data with Rectified Flow},
author={Xingchao Liu and Chengyue Gong and Qiang Liu},
booktitle={The Eleventh International Conference on Learning Representations },
year={2023},
url={https://openreview.net/forum?id=XVjTT1nw5z}
}

@inproceedings{Peebles2023DiT,
  author={Peebles, William and Xie, Saining},
  booktitle={2023 IEEE/CVF International Conference on Computer Vision (ICCV)}, 
  title={Scalable Diffusion Models with Transformers}, 
  year={2023},
  volume={},
  number={},
  pages={4172-4182},
  doi={10.1109/ICCV51070.2023.00387}}

@inproceedings{
clark2024draft,
title={Directly Fine-Tuning Diffusion Models on Differentiable Rewards},
author={Kevin Clark and Paul Vicol and Kevin Swersky and David J. Fleet},
booktitle={The Twelfth International Conference on Learning Representations},
year={2024},
url={https://openreview.net/forum?id=1vmSEVL19f}
}

@inproceedings{eijkelboom2024variational,
 author = {Eijkelboom, Floor and Bartosh, Grigory and Naesseth, Christian A. and Welling, Max and van de Meent, Jan-Willem},
 booktitle = {Advances in Neural Information Processing Systems},
 doi = {10.52202/079017-0374},
 editor = {A. Globerson and L. Mackey and D. Belgrave and A. Fan and U. Paquet and J. Tomczak and C. Zhang},
 pages = {11735--11764},
 publisher = {Curran Associates, Inc.},
 title = {Variational Flow Matching for Graph Generation},
 url = {https://proceedings.neurips.cc/paper_files/paper/2024/file/15b780350b302a1bf9a3bd273f5c15a4-Paper-Conference.pdf},
 volume = {37},
 year = {2024}
}

@misc{Hao2024Coconut,
      title={Training Large Language Models to Reason in a Continuous Latent Space}, 
      author={Shibo Hao and Sainbayar Sukhbaatar and DiJia Su and Xian Li and Zhiting Hu and Jason Weston and Yuandong Tian},
      year={2026},
      eprint={2412.06769},
      archivePrefix={arXiv},
      primaryClass={cs.CL},
      url={https://arxiv.org/abs/2412.06769}, 
}

@inproceedings{Vaswani2017Attention,
title	= {Attention is All You Need},
author	= {Ashish Vaswani and Noam Shazeer and Niki Parmar and Jakob Uszkoreit and Llion Jones and Aidan N. Gomez and Lukasz Kaiser and Illia Polosukhin},
booktitle = {Advances in Neural Information Processing Systems},
year	= {2017},
URL	= {https://arxiv.org/pdf/1706.03762.pdf}}

@inproceedings{
wei2022cot,
title={Chain of Thought Prompting Elicits Reasoning in Large Language Models},
author={Jason Wei and Xuezhi Wang and Dale Schuurmans and Maarten Bosma and Brian Ichter and Fei Xia and Ed H. Chi and Quoc V Le and Denny Zhou},
booktitle={Advances in Neural Information Processing Systems},
editor={Alice H. Oh and Alekh Agarwal and Danielle Belgrave and Kyunghyun Cho},
year={2022},
url={https://openreview.net/forum?id=_VjQlMeSB_J}
}

@inproceedings{katharopoulos2020transformers,
  author    = {Katharopoulos, Angelos and Vyas, Apoorv and Pappas, Nikolaos and Fleuret, Fran{\c{c}}ois},
  title     = {Transformers Are {RNN}s: Fast Autoregressive Transformers with Linear Attention},
  booktitle = {Proceedings of the 37th International Conference on Machine Learning},
  series    = {Proceedings of Machine Learning Research},
  volume    = {119},
  pages     = {5156--5165},
  publisher = {PMLR},
  year      = {2020},
  url       = {https://proceedings.mlr.press/v119/katharopoulos20a.html}
}
